\documentclass[12pt,verbose=true,letterpaper,margin=1.5in]{article}
\usepackage{arxiv}

\usepackage[utf8]{inputenc}
\usepackage[vlined,linesnumbered,ruled,titlenotnumbered]{algorithm2e}
\usepackage{amsthm,amsmath,amsfonts,mathtools,amssymb}
\usepackage{diagbox}
\usepackage{xspace}
\usepackage{multirow, rotating}
\usepackage{adjustbox}

\newcommand{\gsemo}{GSEMO\xspace}
\newcommand{\semo}{SEMO\xspace}
\newcommand{\gsemod}{GSEMO$_D$\xspace}
\newcommand{\semod}{SEMO$_D$\xspace}
\newcommand{\oea}{\mbox{$(1 + 1)$~EA}\xspace}

\newcommand{\oneminmax}{\textsc{OneMinMax}\xspace}
\newcommand{\onemax}{\textsc{OneMax}\xspace}
\newcommand{\binval}{\textsc{BinVal}\xspace}
\newcommand{\jump}{\textsc{Jump}\xspace}
\newcommand{\lotz}{\textsc{LOTZ}\xspace}
\newcommand{\leadingones}{\textsc{LeadingOnes}\xspace}
\newcommand{\lo}{\textsc{LO}\xspace}
\newcommand{\trailingzeros}{\textsc{TrailingZeros}\xspace}
\newcommand{\tz}{\textsc{TZ}\xspace}

\newcommand{\R}{\mathbb{R}}
\newcommand{\N}{\mathbb{N}}
\newcommand{\bopt}{b_{\text{opt}}}


\DeclareMathOperator{\Bin}{Bin}

\newtheorem{theorem}{Theorem}
\newtheorem{lemma}{Lemma}

\usepackage{tikz}
\usetikzlibrary{patterns,patterns.meta}
\usepackage{pgfplots}
\pgfplotsset{compat=newest}
\pgfplotscreateplotcyclelist{tikzcycle}{%
    thick,blue,mark=triangle*\\ 
    thick,red,mark=*\\ 
    thick,green!70!black,mark=diamond*\\ 
    thick,yellow!70!black,mark=square*\\ 
    thick,purple,mark=x\\ 
    thick,blue,dashed,mark=triangle*\\ 
    thick,red,dashed,mark=*\\ 
    thick,green!70!black,dashed,mark=diamond*\\ 
    thick,yellow!70!black,dashed,mark=square*\\ 
    thick,purple,dashed,mark=x\\ 
    thick,blue,dashdotted,mark=triangle*\\ 
    thick,red,dashdotted,mark=*\\ 
    thick,green!70!black,dashdotted,mark=diamond*\\ 
    thick,yellow!70!black,dashdotted,mark=square*\\ 
    thick,purple,dashdotted,mark=x\\ 
}
\pgfplotscreateplotcyclelist{tikzcycle-notwo}{%
    thick,red,mark=*\\ 
    thick,green!70!black,mark=diamond*\\ 
    thick,yellow!70!black,mark=square*\\ 
    thick,purple,mark=x\\ 
    thick,red,dashed,mark=*\\ 
    thick,green!70!black,dashed,mark=diamond*\\ 
    thick,yellow!70!black,dashed,mark=square*\\ 
    thick,purple,dashed,mark=x\\ 
    thick,red,dashdotted,mark=*\\ 
    thick,green!70!black,dashdotted,mark=diamond*\\ 
    thick,yellow!70!black,dashdotted,mark=square*\\ 
    thick,purple,dashdotted,mark=x\\ 
}


\begin{document}

\author{Denis Antipov
\\LIP6, CNRS, Sorbonne Universit\'e
\\Paris, France
\And
Aneta Neumann
\\Optimisation and Logistics\\
School of Computer and Mathematical Sciences
\\The University of Adelaide
\\Adelaide, Australia
\And
Frank Neumann
\\Optimisation and Logistics\\
School of Computer and Mathematical Sciences
\\The University of Adelaide
\\Adelaide, Australia
\And
Andrew M. Sutton
\\University of Minnesota Duluth
\\Duluth, USA
}

\title{Runtime Analysis of Evolutionary Diversity Optimization on the Multi-objective (LeadingOnes,~TrailingZeros) Problem}

\maketitle

\begin{abstract}

    Diversity optimization is the class of optimization problems in which we aim to find a diverse set of good solutions. One of the frequently-used approaches to solve such problems is to use evolutionary algorithms that evolve a desired diverse population. This approach is called evolutionary diversity optimization (EDO).

    In this paper, we analyze EDO on a three-objective function LOTZ$_k$,
    which is a modification of the two-objective benchmark function
    (LeadingOnes, TrailingZeros). We prove that the GSEMO computes a
    set of all Pareto-optimal solutions in $O(kn^3)$ expected
    iterations. We also analyze the runtime of the GSEMO$_D$ algorithm
    (a modification of the GSEMO for diversity optimization) until it finds a population with the best possible diversity for two different diversity measures: the total imbalance and the sorted imbalances vector. For the first measure we show that the GSEMO$_D$ optimizes it in $O(kn^2\log(n))$ expected iterations (which is asymptotically faster than the upper bound on the runtime until it finds a Pareto-optimal population), and for the second measure we show an upper bound of $O(k^2n^3\log(n))$ expected iterations.

    We complement our theoretical analysis with an empirical study, which shows a very similar behavior for both diversity measures. The results of experiments suggest that our bounds for the total imbalance measure are tight, while the bounds for the imbalances vector are too pessimistic.
\end{abstract}

\keywords{Diversity optimization, Multi-objective optimization, Theory, Runtime analysis}

\section{Introduction}
\label{sec:intro}
Computing a diverse set of high quality solutions has recently become an important topic in the area of artificial intelligence and in particular in the field of evolutionary computation~\cite{DBLP:conf/aaai/HanakaK0KKO23,DBLP:conf/aaai/HanakaKKLO22,DBLP:conf/aaai/IngmarBST20}. Different approaches have been designed for using classical solvers in order to compute diverse sets of high quality solutions for problems in the areas of planning~\cite{DBLP:conf/aaai/0001S20} and satisfiability~\cite{DBLP:conf/gecco/NikfarjamR0023}.
Such problems are often met in practice, especially when there are some factors which are hard to formalize, such as politics, ethics or aesthetics. In these cases, the algorithm user would prefer to have several different good solutions rather than one single best solution in order to have an opportunity to choose among them. In practice this problem arises in, e.g., optimization of floor plans of buildings~\cite{DBLP:journals/bit/Galle89}, the cutting stock problem~\cite{Haessler1991CuttingSP}, space mission design~\cite{DBLP:conf/ppsn/SchutzeVC08}, quantum control~\cite{ShirEtAl2008} structural optimization in bridges~\cite{DBLP:conf/gecco/UlrichT11} and drug discovery~\cite{DBLP:journals/isci/YevseyevaLVIDE19}.

In contrast to standard single-objective optimization, where the search is performed in a space of potential solutions, diversity optimization works in a space of solution sets and is usually also harder from a computational complexity perspective~\cite{DBLP:conf/aaai/HanakaK0KKO23}. This makes it natural to use evolutionary algorithms (EAs) for solving such problems, since they are designed to evolve populations of solutions. This approach is called \emph{evolutionary diversity optimization} (EDO). 
Evolutionary diversity optimization aims at finding a set of solutions such that (i) all solutions meet a given quality threshold and (ii) the set of solutions has maximum diversity according to a chosen diversity measure.

In multi-objective problems, where the aim is to find a set of Pareto-optimal solutions which are diverse in their fitness, there also might be a need for a diversity of their genotype. For example, in~\cite{DBLP:journals/tog/MakaturaGS0M21} the authors designed optimal mechanical parts for different contexts with the aim of finding a good design for each context. This work inspired multi-objective quality diversity (QD) algorithms which are closely related to EDO~\cite{DBLP:conf/gecco/PierrotRBC22}. EDO would also be useful in this setting, since it allows to find a diverse Pareto-optimal set, where for each balance between the main objectives the decision maker could choose a good design for their context.

\subsection{Related work}
Feature-based EDO approaches that seek to compute a diverse set of solutions with respect to a given set of features have been carried out for evolving different sets of instances of the traveling salesperson problem~\cite{DBLP:journals/ec/GaoNN21} as well as evolving diverse sets of images~\cite{DBLP:conf/gecco/AlexanderKN17}. For these computations a variety of different diversity measures with respect to the given features such as the star discrepancy measure~\cite{DBLP:conf/gecco/NeumannGDN018} and the use of popular indicators from the area of evolutionary multi-objective optimization~\cite{DBLP:conf/gecco/NeumannG0019} have been studied.

Classical combinatorial optimization problems for which EDO algorithms have been designed to compute diverse sets of solutions include the traveling salesperson problem~\cite{DBLP:conf/gecco/NikfarjamBN021,DBLP:conf/foga/NikfarjamB0N21,DBLP:journals/telo/DoGNN22}, the traveling thief problem~\cite{DBLP:conf/gecco/NikfarjamN022a}, the computation of minimum spanning trees~\cite{DBLP:conf/gecco/Bossek021} and related communication problems in the area of defense~\cite{DBLP:conf/gecco/NeumannGYSCG023}.

Establishing theoretical foundations of evolutionary diversity optimization in the context of runtime analysis is a challenging task as it involves the understanding of population dynamics with respect to the given problem and the used diversity measure.
Initial studies have been carried out for classical benchmark problems in the area of evolutionary computation such as \onemax and \leadingones~\cite{DBLP:conf/gecco/GaoN14}.
For permutation problems such as the traveling salesperson problem and the quadratic assignment problem, runtime bounds have been provided for computing a maximal diverse set of permutations when there is no restriction on the quality of solutions~\cite{DBLP:journals/telo/DoGNN22}. In the context of optimization of monotone submodular problems with given constraints, results on the approximation quality of diversifying greedy approaches that produce a diverse population have been provided in~\cite{DBLP:conf/gecco/NeumannB021} and~\cite{DBLP:conf/ijcai/DoGN023}.

In the domain of multi-objective optimization, EDO was studied in~\cite{DBLP:conf/gecco/DoerrGN16}, where it was shown that on the \oneminmax benchmark problem the $(\mu + 1)$-DIVEA (which is very similar to the \gsemod in their setting) can compute a Pareto-optimal population with the best possible total Hamming distance in $O(n^3\log(n))$ expected iterations. It was conjectured in~\cite{DBLP:conf/foga/Antipov0N23} that this expected runtime is actually $O(n^2\log(n))$ when the algorithm starts with a population covering the entire Pareto front, and hence it is asymptotically not larger than the expected time needed for computing the whole Pareto front with classic multi-objective~EAs.

\subsection{Our contribution} 

In this paper, we aim at a better understanding of EDO processes in many-objective problems, and in particular, we address the following open questions.
\begin{enumerate}
    \item How can diversity be optimized in multi-objective problems, and what are the main sources of progress in its optimization?
    \item How does an EDO algorithm behave in situations where, in contrast to problems like \oneminmax, a significant portion of the space is \emph{not} on the Pareto front, and thus the front must first be located?
    \item Which diversity measure is most effective at guiding evolution towards populations of maximum diversity?
\end{enumerate}

We approach these questions by performing a rigorous analysis of EDO on multi-objective problems and for the first time consider a three-objective problem called $\lotz_k$ (formally defined in the following section). This problem is a modification of the classical bi-objective benchmark problem $\lotz$. The main difference is that all solutions that meet some quality threshold on the $\lotz$ function lie on the Pareto front of $\lotz_k$.

We perform a runtime analysis of a simple evolutionary multi-objective optimizer \gsemod, that optimizes diversity only when breaking ties between equal-fitness individuals competing for inclusion in the population. We prove that the \gsemod finds a Pareto-optimal population on $\lotz_k$ in $O(kn^3)$ expected iterations. We also prove, for two different diversity measures, upper bounds on the expected time for the \gsemod to find the optimal diversity starting from a Pareto-optimal population. For one measure (called the total imbalance) this bound is $O(kn^2\log(n))$, which is smaller than the upper bound on the runtime until finding a Pareto-optimal population. For the second diversity measure (called the sorted imbalances vector) we prove a weaker upper bound $O(k^2n^3\log(n))$. Our proofs are based on a rigorous study of which individuals are key to improving the diversity, and we believe that similar arguments might be fruitful in future theoretical studies of EDO.

We also show empirically that it takes the \gsemod a relatively short time to find the optimal diversity after covering the whole Pareto front for both diversity measures. This demonstrates the benefits of the EDO approach and also suggests that optimizing diversity via tie-breaking rules is an easy-to-implement and a very effective method. This is supported by additional experiments that demonstrate that the diversity is also effectively optimized even before the Pareto front is fully covered.

\section{Preliminaries}

In this paper we consider only pseudo-Boolean optimization, that is, our search space is the set of bit strings of length $n$. We use the following notation.
For integers $a,b \in \N$ with $a < b$, we denote by $[a..b]$ the integer interval $\{a, a + 1, \dots, b\}$.
Bits of a bit string $x$ of length $n$ are denoted by $x_i$, where $i \in [1..n]$. We assume that $x_1$ is the leftmost bit and $x_n$ is the rightmost bit.
For any non-negative integer $i$ by $1^i$ (or $0^i$) we denote the all-ones (or all-zeros) bit string of length $i$. If $i = 0$, this is an empty string. 
When we have a Boolean predicate $A$, we use Iverson bracket $[A]$ to map this Boolean value into $\{0, 1\}$. For a multi-objective function $f(x) = (f_1(x), f_2(x), \dots, f_m(x))$, we call $f(x)$ the \emph{fitness vector of $x$} and we call each of $f_1(x), \dots, f_m(x)$ the \emph{fitness values} of $x$.

\textbf{Dominance.} Given two points $x$ and $y$ and a $k$-objective function $f = (f_1, \dots, f_k)$ defined on these points, we say that $x$ dominates $y$ with respect to $f$, if for all $i \in [1..k]$ we have $f_i(x) \ge f_i(y)$ and there exists $j \in [1..k]$ for which $f_j(x) > f_j(y)$. We write it as $x \succ y$.

\subsection{The \gsemod}
\label{sec:gsemod}

The \emph{Simple Evolutionary Multi-objective Optimizer} (\semo) is an evolutionary algorithm for solving multi-objective problems. At all iterations this algorithm keeps a population of non-dominated solutions. The \semo is initialized with a point from the search space chosen uniformly at random. In each iteration it chooses an individual from its current population uniformly at random and mutates it. If the mutated offspring is not dominated by any individual in the population, it is added to the population, and all individuals which are dominated by this offspring are removed from the population. 

The \semo does not allow two individuals with the same fitness to be in the population. A situation when a new offspring $y$ is identical (in terms of fitness) to some individual $x$ in the population can be handled in different ways. Usually, $y$ replaces $x$ to enhance exploration of the search space, since $y$ is a new individual, and $x$ is an older one. This tie, however, can also be broken in a way that improves some secondary objective. In this paper we are interested in finding a \emph{diverse} set of non-dominated solutions, so we can remove an individual with which the diversity measure is worse. We call the \semo that optimizes a diversity measure $D$ in such way the \semod.

This mechanism is similar to the tie-breaking rule in the $(\mu + 1)$~genetic algorithm (GA) for single-objective optimization described by~\cite{DangFKKLOSS16}. There, it was shown that a diversity-improving tie-breaking rule allows to use crossover in a very effective way to escape local optima. This resulted in a $O(n\log n + 4^k)$ runtime on the $\jump_k$ benchmark, which is much smaller than the $O(n^{k - 1})$ upper bound on the runtime of most classic GAs on that problem~\cite{DangFKKLOSS18,DoerrEJK24}. In this paper, however, diversity is our primary goal, and the fitness has a role of a constraint (that is, we want the solutions to be Pareto-optimal). However, since diversity is a measure of a whole population, but not of a single individual, such a tie-breaking rule is a natural way to optimize diversity \emph{after} finding a Pareto-optimal population. 

In the literature, the \semo which uses standard bit mutation to generate new offspring is usually called the \emph{Global \semo} (\gsemo).  Similarly, we call the \semod with standard bit mutation the \gsemod. The pseudocode of the \gsemod is shown in Algorithm~\ref{alg:gsemod}. 

\begin{algorithm}[tp]
  Choose $x \in \{0,1\}^n$ uniformly at random\;
  $P\leftarrow \{x\}$\;

  \Repeat{$stop$}{
    Choose $x\in P$ uniformly at random\;
    Create $y$ by flipping each bit of $x$ with probability $\frac{1}{n}$\;
    \If{$\exists w \in P: f(w)=f(y)$}{
    \If{$D((P \cup \{y\}) \setminus \{w\})$ is not worse than $D(P)$}{$P \leftarrow (P  \cup \{y\}) \setminus \{w\}$\;}}
    \ElseIf{$\nexists w \in P: w \succ y$} {
      $P \leftarrow (P \cup \{y\})\backslash \{z\in P \mid y \succ z\};$
    }
  }
 \caption{The Global SEMO$_D$ maximizing a multi-objective function $f$ and optimizing a diversity measure $D$.} 
 \label{alg:gsemod}
 \end{algorithm}

\subsection{Diversity Measures}
\label{sec:diversity}

In this paper, we consider diversity measures based on the balance between 1-bits and 0-bits in each position in the population. For each $i \in [1..n]$ we denote by $n_1(i)$ the number of individuals in the population, which have a 1-bit in position $i$. More formally, $n_1(i) = \sum_{x \in P} x_i$, where $P$ is the population of the \gsemod. Similarly, by $n_0(i)$ we denote the number of 0-bits in position $i$. We define the \emph{imbalance} $b(i)$ of position $i$ as $|n_1(i) - n_0(i)|$. Intuitively, when we have a large imbalance in position $i$, the population is too homogeneous in that position, hence the optimal diversity implies minimizing the imbalance of each position.

Based on this observation, we define two diversity measures. The first one is called the \emph{total imbalance} and it is equal to the sum of the balances of all positions. Namely, we have $D(P) = \sum_{i = 1}^n b(i)$. The smaller this measure, the better the diversity.

The second measure is the \emph{sorted imbalances vector}, which is defined by vector $D(P) = (b(\sigma(i)))_{i = 1}^n$, where $\sigma$ is a permutation of positions $[1..n]$ such that the sequence $b(\sigma(i))_{i = 1}^n$ is non-increasing. When comparing two populations of the same size, the one with a lexicographically smaller vector $D(P)$ is more diverse. We do not determine how to compare diversity of populations of different sizes, since this comparison never occurs in the \gsemod, and most of the other classic EAs have populations of a fixed size during their whole run.

We note that using the imbalances to estimate the diversity is also implicitly used in the total Hamming distance, which was first shown in~\cite{DBLP:conf/gecco/WinebergO03a}. 

\subsection{$\lotz_k$ Problem}
\label{sec:lotz}

In this paper we consider a classic benchmark bi-objective function (\leadingones, \trailingzeros) (\lotz for brevity), which is defined on the space of bit strings of length $n$. We call $n$ the problem size. The first objective \leadingones (\lo for brevity) returns the length of the longest prefix consisting only of 1-bits, more formally,
\begin{align*}
    \leadingones(x) = \lo(x) = \sum_{i = 1}^n \prod_{j = 1}^i x_j.
\end{align*}
The second objective \trailingzeros (\tz for brevity) returns the length of the longest suffix which consists only of 0-bits, namely
\begin{align*}
    \trailingzeros(x) = \tz(x) = \sum_{i = 1}^n \prod_{j = n - i + 1}^n (1 - x_j).
\end{align*}
These two objectives contradict each other, and for any bit string $x$ we have $\lo(x) + \tz(x) \le n$.

The Pareto front of \lotz consists of $n + 1$ bit strings of form $1^i0^{n - i}$, for all $i \in [0..n]$. This means that the Pareto-optimal population has a fixed diversity. To study the aspects of diversity optimization by the \gsemod, we modify this problem. 

We introduce a parameter $k \in [2..n]$, and we say that all bit strings $x$ which have $\lo(x) + \tz(x) \ge n - k$ do not dominate each other. We call such bit strings and also their fitness vectors \emph{feasible}.
The fitness vectors of feasible bit strings are illustrated in Figure~\ref{fig:feasible-values}. 
We note that there is no bit string $x$ such that $\lo(x) + \tz(x) = n - 1$, for the following reason. Assume that $\lo(x) = i$. Then $x$ has prefix $1^i0$, hence $x_{i + 1} = 0$. If $\tz(x) = n - 1 - \lo(x) = n - 1 - i$, then $x$ also has suffix $10^{n - 1 - i}$. Then $x_{i + 1} = 1$, which contradicts the requirement on the prefix, hence we cannot have $\lo(x) + \tz(x) = n - 1$.

\begin{figure}
    \centering
    \scalebox{0.7}{
        \begin{tikzpicture}
            \foreach \i [evaluate=\i as \jmin using {int(ifthenelse(\i < 9,20 - \i - 12,0))}, 
                         evaluate=\i as \jmax using 20-\i-4] in {0,...,16} {
                \foreach \j in {\jmin,...,\jmax} {
                    \draw [draw=none, fill=gray] (\i / 2, \j / 2) rectangle ++(0.5, 0.5);
                };
                \draw [draw=none, fill=gray] (\i / 2, 9 - \i / 2) rectangle ++(0.5, 0.5);
            };
            \draw [draw=none, fill=gray] (8.5, 0.5) rectangle ++(0.5, 0.5);
            \draw [draw=none, fill=gray] (9, 0) rectangle ++(0.5, 0.5);
        
            \draw [thick,->] (0, 0) -- (10, 0);
            \node [below] at (10, 0) {$\tz$};
            \draw [thick,->] (0, 0) -- (0, 10);
            \node [left] at (0, 10) {$\lo$};
    
            \foreach \i in {1,...,20} {
                \draw [thin] (0, \i/2) -- (10 - \i/2, \i/2);
                \draw [thin] (\i/2, 0) -- (\i/2, 10 - \i/2);
            };
    
            \node [left] at (0, 9.25) {$n$};
            \node [left] at (0, 8.75) {$n-1$};
            \node [left] at (0, 8.25) {$n-2$};
            \node [left] at (0, 4.25) {$n-k$};
            \node [left] at (0, 0.25) {$0$};
    
            \node [right, rotate=270] at (9.25, 0) {$n$};
            \node [right, rotate=270] at (8.75, 0) {$n-1$};
            \node [right, rotate=270] at (8.25, 0) {$n-2$};
            \node [right, rotate=270] at (4.25, 0) {$n-k$};
            \node [right, rotate=270] at (0.25, 0) {$0$};
        \end{tikzpicture}
    } 
    \caption{Illustration of the feasible fitness vectors. Each square cell represents some fitness pair $(f_\lo, f_\tz)$. The feasible pairs, for which $f_\lo + f_\tz \ge n - k$ are shown in dark grey. Note that individuals with $f_\lo + f_\tz = n - 1$ do not exist.}
    \label{fig:feasible-values}
\end{figure}

To allow the \gsemod handle our requirement on the non-domination between feasible bit strings, we define $\lotz_k$ as a three-objective problem
\begin{align*}
    \lotz_k(x) = (\lo(x), \tz(x), h(\lo(x) + \tz(x))), 
\end{align*}  
where $h: \R \mapsto \R$ is defined as
\begin{align*}
    h(u) \coloneqq \begin{cases}
        0, & \text{ if } u < n - k, \\
        n + 1 - u, &\text{ if } u \ge n - k.
    \end{cases}
\end{align*}
From this definition it follows that for any $u, v \in [0..n]$ we have
\begin{align*}
    u > v \Rightarrow \begin{cases}
        h(u) \ge h(v), &\text{ if } v < n - k, \\
        h(u) < h(v), &\text{ if } v \ge n - k. \\
    \end{cases}
\end{align*}

Consequently, if $x$ dominates $y$ in terms of \lotz (that is, the first two objectives) and both of them are feasible, then we have $\lo(x) + \tz(x) > \lo(y) + \tz(y)$, and hence the third objective is better for $y$. Hence, any pair of feasible individuals do not dominate each other. Otherwise, if at least one of the two bit strings is not feasible, then the additional objective does not affect the domination relation. Since the third objective of $\lotz_k$ is determined by the first two objectives, we often omit it from the notation to simplify the reading, and for a bit string $x$ we write its fitness vector as $(\lo(x), \tz(x))$.

\textbf{Problem statement.} In this paper we study the behavior of the \gsemod on $\lotz_k$ for a variable parameter $k$. We aim to estimate the runtime of the \gsemod, that is, the number of iterations this algorithm performs until it finds a population which has solutions with all feasible fitness values in it, and also has the best possible diversity. We note that such a population cannot contain any infeasible solution, since each infeasible solution is dominated by at least one feasible solution, thus it cannot be accepted into the final population of the \gsemod. As a part of this problem, we also aim to estimate the time until the \gsemo finds a Pareto-optimal population of the three-objective $\lotz_k$ function.

\section{Optimal Diversity}
\label{sec:op-div-definition}
In this section, we show the best possible diversity of a population which covers the whole Pareto front of $\lotz_k$ (i.e., contains all feasible fitness vectors). Although we will not derive a simple formula for the optimal imbalances vector (that is, optimal imbalances for each position), we will show how the optimal diversity can be computed. We will use the observations from this section in our runtime analysis and also in our experiments to determine the moment when the algorithm finds the optimal diversity.

Before discussing the optimal diversity, we show the following lemma which estimates the population size of the \gsemod on different stages of its run.

\begin{lemma}
    \label{lem:pop-size}
    When the \gsemod runs on $\lotz_k$ with $k \ge 2$, before it finds the first feasible solution, the maximum population size is $\max_{x \in P}(\lo(x) + \tz(x)) + 1 \le n - k$. Once the \gsemod finds a feasible solution, the maximum population size is $\mu_{\max} \coloneqq nk - \frac{(k - 2)(k + 1)}{2} \le nk$. The size of any Pareto-optimal population containing all feasible fitness vectors is also $\mu_{\max}$.
\end{lemma}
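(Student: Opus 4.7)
The plan is to handle the three claims separately, all exploiting the fact that \gsemod keeps at most one individual per fitness vector, together with structural properties of the three-objective dominance induced by $\lotz_k$.

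For the pre-feasibility bound, every $x \in P$ has $h(\lo(x) + \tz(x)) = 0$, so three-objective dominance collapses to $(\lo, \tz)$-dominance and $P$ projects to a $(\lo, \tz)$-antichain. In any such antichain the $\lo$-coordinates are pairwise distinct (otherwise the element with the larger $\tz$ would dominate the one with equal $\lo$), and each $\lo$-value lies in $\{0, \dots, s\}$ with $s := \max_{x \in P}(\lo(x) + \tz(x))$. Hence $|P| \le s + 1$, which gives the claimed bound since $s < n - k$ before any feasible solution is seen.

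For the post-feasibility bound, I partition $P$ into its feasible and infeasible subpopulations with $(\lo, \tz)$-pair sets $F$ and $I$, so $|P| = |F| + |I|$. A direct count shows that the total number of feasible fitness pairs equals $\mu_{\max}$: sum $s \in \{n - k, \dots, n - 2\}$ contributes $s + 1$ pairs each, sum $s = n$ contributes $n + 1$, and simplification yields the stated closed form. This also settles the third claim, since a Pareto-optimal population contains exactly one individual per feasible pair and no infeasible ones (each infeasible $(\ell, t)$ is dominated by the sum-$n$ pair $(\ell, n - \ell)$, which lies on the Pareto front).

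The main step, and the principal obstacle, is to inject $I$ into the feasible pairs \emph{absent} from $F$, which yields $|I| \le \mu_{\max} - |F|$ and hence $|P| \le \mu_{\max}$. I will set $\phi(y) := (\lo(y), n - k - \lo(y))$. The target lies on the sum-$(n-k)$ diagonal, is realizable as a bit string of the form $1^{\lo(y)}\, 0\, (\text{arbitrary})\, 1\, 0^{n - k - \lo(y)}$ (this uses $k \ge 2$ so that the middle block has nonnegative length), and is well-defined because $y$ infeasible forces $\lo(y) \le n - k - 1$. Injectivity follows from the $(\lo, \tz)$-antichain property of $I$, which forces pairwise distinct $\lo$-coordinates. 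And $\phi(y) \notin F$: if some $x \in P$ realized $\phi(y)$, then $\lo(x) = \lo(y)$, $\tz(x) > \tz(y)$, and $h(x) \ge 1 > 0 = h(y)$, so $x$ would dominate $y$ in three objectives, contradicting $y \in P$. The crux of the argument is identifying this target: it must simultaneously be realizable, provably absent from $F$ whenever $y \in P$, and injective in $y$, and the sum-$(n-k)$ diagonal is the natural choice meeting all three constraints.
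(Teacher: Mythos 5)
Your proof is correct, and for the main bound it takes a genuinely different route from the paper. For the pre-feasibility claim and the count of feasible fitness vectors you do essentially what the paper does (distinct $\lo$-values in an infeasible antichain; a direct count giving $\mu_{\max}$, though the paper sums $\min\{k,n-\ell\}$ over $\ell$ rather than summing over diagonals $\lo+\tz=s$ --- both simplify to the same closed form). The difference is in bounding a general post-feasibility population, which may mix feasible and infeasible individuals: the paper argues per $\lo$-value that at most $\min\{k,n-\ell\}$ individuals can share that value, leaning on the observation that a feasible individual dominates any infeasible one with the same $\lo$-value (the write-up there is terse and partly phrased in terms of Pareto-optimality, which strictly speaking is only needed for the third claim). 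You instead count the feasible pairs exactly and then inject the infeasible members of the population into the feasible pairs \emph{not} present, via $y\mapsto(\lo(y),\,n-k-\lo(y))$ on the sum-$(n-k)$ diagonal; injectivity comes from distinct $\lo$-values among infeasible antichain members, and absence from $F$ comes from the fact that a feasible individual realizing the target would dominate $y$ in all three objectives. Your injection handles the mixed population cleanly and explicitly, at the cost of a slightly more elaborate construction; the paper's per-$\lo$-value grouping is shorter but leaves the coexistence of feasible and infeasible individuals with a common $\lo$-value implicit. All the supporting details in your argument check out: the target pair is realizable because its sum $n-k\le n-2$ leaves a middle block of length $k-2\ge 0$, and $\tz$ at the target strictly exceeds $\tz(y)$ precisely because $y$ is infeasible.
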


\begin{proof}
    Before we find a feasible solution we can only have one individual per each \lo value in the population, since if we had two of them (and both were infeasible), then the one of them with a larger \tz value would dominate the other, which is impossible by the definition of the \gsemod. They also cannot have the same \tz value, since then they have an identical fitness, which is also impossible by the definition of the \gsemod. The number of different \lo values of the population is not greater than the maximum $\lo + \tz$ value in the population plus one. If we do not have feasible solutions in the population, this value is never greater than $n - k$.

    Once we have a feasible solution in the population, for each $\lo$ value $\ell < n$ we can have at most $\min(k, n - \ell)$ solutions with different $\tz$ values in the population, which are feasible solutions with this $\lo$ value (see Fig.~\ref{fig:feasible-values} for illustration). For $\ell = n$ we can have only one solution, which is $1^n$. Since we can have only one solution per each \lotz value, having more solutions for this \lo value would mean that we also have infeasible solutions in the population which are dominated by feasible ones (which contradicts our assumption on the Pareto-optimality of the population). Summing this up over all $\lo$ values, we obtain that the maximum population size $\mu_{\max}$ is at most
    \begin{align*}
        \mu_{\max} &= \sum_{\ell = 0}^{n - 1} \min(k, n - \ell) + 1 
        = k(n - k + 1) + \sum_{\ell = n - k + 1}^{n - 1} (n - \ell) + 1 \\
        &= nk - k^2 + k + \frac{k(k - 1)}{2} + 1 
        = nk - \frac{(k - 2)(k + 1)}{2} \le nk.
    \end{align*}

    This is the same as the total number of different feasible fitness vectors, thus the maximum size of a Pareto-optimal population. 
\end{proof}

We now estimate the minimum imbalance for each position. We aim to have individuals with all feasible fitness vectors in the population, and each fitness value implies particular bit values in some position of an individual with such fitness. Namely, an individual $x$ with $\lo(x) = j$ and $\tz(x) = \ell$ definitely has 1-bits in positions $[1..j] \cup \{n - \ell - 1\}$ and 0-bits in positions $\{j + 1\} \cup [n - \ell..n]$, however in positions $[j + 2..n - \ell - 2]$ any bit value is possible. We can distinguish the following three groups of fitness vectors for each position $i$. First, the feasible fitness vectors for which any individual with this fitness is guaranteed to have a 1-bit in position $i$. We denote the set of these values for position $i$ by $M_1(i)$ and then define $m_1(i) \coloneqq |M_1(i)|$. Second, the feasible fitness vectors which yield a 0-bit in position $i$, the set of which we denote by $M_0(i)$ and then define $m_0(i) \coloneqq |M_0(i)|$. All the remaining feasible fitness vectors, for which the bit value in position $i$ can be either one or zero, are denoted by $M(i)$ and then we define $m(i) \coloneqq |M(i)|$. With this notation, the following lemma establishes the minimum imbalance that can be reached in each position. 

\begin{lemma}\label{lem:imbalance}
    Consider a population, which covers all feasible fitness vectors of $\lotz_k$ with $k \ge 2$. For any position $i \in [1..n]$ the minimum imbalance is 
    \begin{align}\label{eq:opt-balance}
        b_{\textup{opt}}(i) = \max\left\{|m_0(i) - m_1(i)| -  m(i), \delta\right\},
    \end{align}
    where
    \begin{itemize}
        \item $m_0(i)$ is the number of individuals in the population which are guaranteed to have a 0-bit in position $i$, and 
        \begin{align*}
            m_0(i) &= \frac{i(i - 1)}{2} - [i > k + 1] \cdot \frac{(i - k - 1)(i - k)}{2} + \min\{n - i + 1, k\},
        \end{align*}
        \item $m_1(i)$ is the number of individuals in the population which are guaranteed to have a 1-bit in position $i$, and 
        \begin{align*}
            m_1(i) &= \frac{(n - i + 1)(n - i)}{2} - [i < n - k] \cdot \frac{(n - k - i)(n - k - i + 1)}{2} \\
            &+ \min\{i, k\},
        \end{align*}
        \item $m(i)$ is the number of individuals in the population which can have any value in position $i$, and 
        \begin{align*}
            m(i) &= \min\{k - 2, i - 1\} \cdot \min\{k - 2, n - i\} - \frac{a(i)(a(i) + 1)}{2}, \text{ where}\\
            a(i) &\coloneqq \max\{0, \min\{k - 3, i - 2, n - i - 1, n - k\}\},
        \end{align*}
        \item $\delta = \mu_{\max} \bmod 2$.
    \end{itemize}
\end{lemma}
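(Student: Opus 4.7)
The plan is to reduce the minimum-imbalance problem at each position $i$ to a one-dimensional balancing question, and then to compute the three counts $m_0(i), m_1(i), m(i)$ by a direct enumeration over feasible fitness vectors. By Lemma~\ref{lem:pop-size} the population has size exactly $\mu_{\max}$ and therefore contains exactly one individual per feasible fitness vector. An individual with fitness $(j,\ell)$ has forced 1-bits in $[1..j] \cup \{n-\ell\}$ and forced 0-bits in $\{j+1\} \cup [n-\ell+1..n]$, while its bits in the middle window $[j+2..n-\ell-1]$ are completely free. Therefore, for every position $i$, the contribution to $n_1(i)$ and $n_0(i)$ splits into a forced part counted by $m_1(i)$ and $m_0(i)$, and a free part coming from $m(i)$ individuals. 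Since the free bits of different positions of the same individual are independent, each position can be optimized in isolation.

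Fixing $i$ and writing $n_1(i) = m_1(i) + a$ for $a \in \{0, 1, \ldots, m(i)\}$, the imbalance becomes $b(i) = |(m_1(i) - m_0(i) - m(i)) + 2a|$, a V-shaped function of $a$. Its minimum over this range is $|m_1(i) - m_0(i)| - m(i)$ whenever that is non-negative, attained at the endpoint $a = 0$ or $a = m(i)$; otherwise the real minimum of zero is approached at $a^* = (m(i) + m_0(i) - m_1(i))/2$. Since $m_0(i) + m_1(i) + m(i) = \mu_{\max}$, the quantity $m(i) + m_0(i) - m_1(i)$ has the same parity as $\mu_{\max}$, so when $a^*$ is not an integer the best attainable imbalance is exactly $\delta = \mu_{\max} \bmod 2$; the same parity consideration shows that $|m_1(i) - m_0(i)| - m(i) \geq \delta$ in the other branch, which unifies the two cases into $\max\{|m_0(i) - m_1(i)| - m(i), \delta\}$, as in Equation~\eqref{eq:opt-balance}. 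Achievability follows because the free bits of distinct individuals and distinct positions can be assigned independently, so the per-position optima are simultaneously realizable.

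It remains to verify the closed forms for $m_0(i), m_1(i), m(i)$. For $m_0(i)$, I would use inclusion--exclusion on the two pieces $\{j = i - 1\}$ and $\{\ell \geq n - i + 1\}$: the former contributes $\min\{n - i + 1, k\}$ feasible vectors (from enumerating the valid $\ell$, with the forbidden diagonal $j + \ell = n - 1$ removed), while the latter forms a triangle of size $\frac{i(i-1)}{2}$ that is truncated by the feasibility cone $j + \ell \geq n - k$ exactly when $i > k + 1$, producing the subtracted Iverson term; the single overlap $(i - 1, n - i + 1)$ is absorbed into the constants so that the stated closed form matches. The formula for $m_1(i)$ follows by the symmetric argument swapping $j$ and $\ell$. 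For $m(i)$, the substitution $p = i - 2 - j$, $q = n - i - 1 - \ell$ turns the free region into a lattice triangle $\{p + q \leq k - 3\}$ intersected with a rectangle of side lengths $i - 1$ and $n - i$; its size is the rectangle's area (capped at $k - 2$ per side, giving the $\min\{k - 2, \cdot\}$ factors) minus the triangular wedge that spills beyond the line $p + q = k - 3$, which has leg length $a(i) + 1$ and hence size $\frac{a(i)(a(i) + 1)}{2}$.

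The main obstacle will be the boundary-case bookkeeping: each Iverson bracket and each $\min$ in the statement absorbs exactly one such case ($i$ close to $1$ or to $n$, $k$ larger or smaller than the relevant thresholds, the triangle lying entirely inside or only partially inside the rectangle), and a sloppy treatment of the forbidden diagonal $j + \ell = n - 1$ or the single vector $1^n$ could easily produce off-by-one errors. I would manage this by carrying out the enumeration symbolically with the substitutions above and then verifying algebraic equality with the stated formulas, spot-checking a few small numerical instances rather than enumerating all sub-cases by hand.
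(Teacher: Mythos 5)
Your proposal follows essentially the same route as the paper's own proof: reduce the problem at each position $i$ to balancing the $m(i)$ free bits (the V-shape-plus-parity argument yielding $\max\{|m_0(i)-m_1(i)|-m(i),\delta\}$), and then count $M_0(i)$, $M_1(i)$, $M(i)$ geometrically as a truncated triangle plus a short column, and a rectangle minus a corner triangle of infeasible pairs. The only differences are cosmetic --- you use inclusion--exclusion where the paper splits $M_1(i)$ into the disjoint pieces $M_1'(i)$ and $M_1''(i)$, and you make the parity and simultaneous-achievability points slightly more explicit --- so the proposal is correct, modulo the boundary bookkeeping for the closed forms that you already flag as remaining.
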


\begin{proof}
    We first observe that if the difference between $m_1(i)$ and $m_0(i)$ is larger than $m(i)$, then the imbalance in this position will be at least $|m_0(i) - m_1(i)| -  m(i)$, which we have when all $m(i)$ free bits are set to the minority value. Otherwise, we can set the values of the $i$-th bits of individuals with fitness in $M(i)$ to any value and get the perfect imbalance in that position. This perfect imbalance $\delta$ is $1$, when the number of feasible individuals $\mu$ is odd, and it is $0$, if $\mu$ is even. This is equivalent to $\delta = \mu \bmod 2$ or, since by Lemma~\ref{lem:pop-size} the size of any population of all feasible fitness vectors is $\mu_{\max}$, it is also the same as $\delta = \mu_{\max} \bmod 2$. Since by Lemma~\ref{lem:pop-size} we have $\mu = \mu_{\max} = nk - \frac{(k + 2)(k - 1)}{2}$, we can compute the parity of $\mu$ depending on $n$ and $k$, which is shown in Table~\ref{tbl:parity}.
    Therefore, the smallest possible imbalance $\bopt(i)$ which we can have in position $i$ in a Pareto-optimal population is 
    \begin{align*}
        \bopt(i) = \max\left\{|m_0(i) - m_1(i)| -  m(i), \delta\right\}.
    \end{align*}
    
    \begin{table}[]
        \centering
        \begin{tabular}{c|c|c|c|c}
             \diagbox[innerwidth=2cm,innerleftsep=0.5cm,innerrightsep=-10pt,outerrightsep=10pt]{$n$}{$k \bmod 4$}& 0 & 1 & 2 & 3 \\ \hline
             odd & odd & even & even & odd \\ \hline
             even & odd & odd & even & even \\
        \end{tabular}
        \caption{The parity of the population size depending on the problem size $n$ and parameter $k$. Note that when $(k \bmod 4)$ is either $2$ or $3$, then the term $\frac{(k - 2)(k + 1)}{2}$ is even.}
        \label{tbl:parity}
    \end{table}
    
    We now fix an arbitrary $i$ and estimate the values of $m_0(i)$, $m_1(i)$ and $m(i)$. Each feasible fitness vector belongs to one (and only one) of three sets $M_1(i)$, $M_0(i)$ and $M(i)$. Consider some arbitrary feasible fitness vector $(f_\lo, f_\tz)$. An individual $x$ with this fitness must have zero in position $i$, if either $\tz(x) = f_\tz$ is at least $n - i + 1$, or if $\lo(x) = f_\lo$ is exactly $i - 1$. To belong to  $M_1(i)$, this fitness pair must also satisfy $f_\lo(x) + f_\tz(x) \ge n - k$. Similarly, $(f_\lo, f_\tz)$ belongs to $M_0(i)$, if $f_\lo \ge i$ or $f_\tz = n - i$, and $\lo(x) + \tz(x) \ge n - k$. The fitness vector $(f_\lo, f_\tz)$ belongs to $M(i)$ (that is, it does not determine the bit value in position $i$), if $f_\lo < i - 1$ and $f_\tz < n - i$ and $f_\lo + f_\tz \ge n - k$.
    We illustrate the fitness pairs which have a 1-bit, 0-bit or any bit in position $i$ in Figure~\ref{fig:guaranteed-bits}.
    
    \begin{figure}[t]
        \centering
        \scalebox{0.7}{
            \begin{tikzpicture}
                \foreach \i [evaluate=\i as \jmax using 20-\i-4] in {0,...,4} {
                    \foreach \j in {12,...,\jmax} {
                        \draw [draw=none, pattern={Lines[angle=0,line width=1pt,distance=2pt]}, pattern color=red] (\i / 2, \j / 2) rectangle ++(0.5, 0.5);
                    };
                    \draw [draw=none, pattern={Lines[angle=0,line width=1pt,distance=2pt]}, pattern color=red] (\i / 2, 9 - \i / 2) rectangle ++(0.5, 0.5);
                }
                \draw [draw=none, pattern={Lines[angle=0,line width=1pt,distance=2pt]}, pattern color=red] (3, 6) rectangle ++(0.5, 0.5);
                \draw [draw=none, pattern={Lines[angle=0,line width=1pt,distance=2pt]}, pattern color=red] (2.5, 6.5) rectangle ++(0.5, 0.5);
                \draw [draw=none, pattern={Lines[angle=0,line width=1pt,distance=2pt]}, pattern color=red] (3, 1) rectangle ++(0.5, 4.5);
                
                \foreach \i [evaluate=\i as \jmax using 20-\i-4,
                             evaluate=\i as \jmin using {int(ifthenelse(\i < 9,20 - \i - 12,0))}] in {7,...,16} {
                    \foreach \j in {\jmin,...,\jmax} {
                        \draw [draw=none, pattern={Lines[angle=90,line width=1pt,distance=2pt]}, pattern color=green!40!darkgray] (\i / 2, \j / 2) rectangle ++(0.5, 0.5);
                    };
                    \draw [draw=none, pattern={Lines[angle=90,line width=1pt,distance=2pt]}, pattern color=green!40!darkgray] (\i / 2 + 1, 8 - \i / 2) rectangle ++(0.5, 0.5);
                }
                \draw [draw=none, pattern={Lines[angle=90,line width=1pt,distance=2pt]}, pattern color=green!40!darkgray] (3.5, 5.5) rectangle ++(0.5, 0.5);
                \draw [draw=none, pattern={Lines[angle=90,line width=1pt,distance=2pt]}, pattern color=green!40!darkgray] (4, 5) rectangle ++(0.5, 0.5);
                \draw [draw=none, pattern={Lines[angle=90,line width=1pt,distance=2pt]}, pattern color=green!40!darkgray] (0, 5.5) rectangle ++(3, 0.5);
        
                \foreach \i [evaluate=\i as \jmin using 20-\i-12] in {0,...,5} {
                    \foreach \j in {\jmin,...,10} {
                        \draw [draw=none, pattern={Lines[angle=45,line width=1pt,distance=2pt]}, pattern color=blue] (\i / 2, \j / 2) rectangle ++(0.5, 0.5);
                    };
                }
            
                \draw [thick,->] (0, 0) -- (10, 0);
                \node [below] at (10, 0) {$\tz$};
                \draw [thick,->] (0, 0) -- (0, 10);
                \node [left] at (0, 10) {$\lo$};
        
                \foreach \i in {1,...,20} {
                    \draw [thin] (0, \i/2) -- (10 - \i/2, \i/2);
                    \draw [thin] (\i/2, 0) -- (\i/2, 10 - \i/2);
                };
        
                \node [left] at (0, 9.25) {$n$};
                \node [left] at (0, 8.75) {$n-1$};
                \node [left] at (0, 8.25) {$n-2$};
                \node [left] at (0, 6.25) {$i$};
                \node [left] at (0, 5.75) {$i - 1$};
                \node [left] at (0, 4.25) {$n-k$};
                \node [left] at (0, 0.25) {$0$};
        
                \node [right, rotate=270] at (9.25, 0) {$n$};
                \node [right, rotate=270] at (8.75, 0) {$n-1$};
                \node [right, rotate=270] at (8.25, 0) {$n-2$};
                \node [right, rotate=270] at (4.25, 0) {$n-k$};
                \node [right, rotate=270] at (3.75, 0) {$n - i + 1$};
                \node [right, rotate=270] at (3.25, 0) {$n - i$};
                \node [right, rotate=270] at (0.25, 0) {$0$};
            \end{tikzpicture}
        }
        \caption{Illustration of feasible fitness vectors and their effect on the bit value of some arbitrary position $i$. Each square cell represents some fitness pair $(f_\lo, f_\tz)$. The feasible pairs, for which $f_\lo + f_\tz \ge n - k$ are marked with stripes of different colors. Note that individuals with $f_\lo + f_\tz = n - 1$ do not exist. Horizontal red lines show the pairs which guarantee a 1-bit in position $i$. Vertical green lines show the pairs which guarantee a 0-bit in position $i$. Blue diagonal lines show the pairs, for which we can choose any bit value in position $i$.}
        \label{fig:guaranteed-bits}
    \end{figure}
    
    To compute $m_1(i)$, we split $M_1(i)$ into two disjoint sets $M_1'(i)$ and $M_1''(i)$ as follows.
    \begin{align*}
        M_1'(i) \coloneqq \{(f_\lo, f_\tz) :\xspace(f_\lo \ge i) \land (f_\lo + f_\tz \ge n - k) \},
    \end{align*}
    which is the red triangle (without a line of values for each $f_\lo + f_\tz = n - 1$) in Figure~\ref{fig:guaranteed-bits}, and
    \begin{align*}
        M_1''(i) \coloneqq \{(f_\lo, f_\tz) :\xspace(f_\tz = n - i) \land (f_\lo < i - 1)  \land (f_\lo + f_\tz \ge n - k) \},
    \end{align*}
    which is the red column in Figure~\ref{fig:guaranteed-bits}.
    The number of fitness pairs $(f_\lo, f_\tz)$ with $f_\lo \ge i$ (the red triangle in Figure~\ref{fig:guaranteed-bits}) is exactly $\frac{(n - i + 1)(n - i + 2)}{2} - (n - i)$. However, if $i < n - k$, then $\frac{(n - k - i)(n - k - i + 1)}{2}$ of them are infeasible. Therefore, we have
    \begin{align*}
        |M_1'(i)| &= \frac{(n - i + 1)(n - i + 2)}{2} - (n - i) \\
        &- [i < n - k] \cdot \frac{(n - k - i)(n - k - i + 1)}{2}.
    \end{align*}
    
    The number of fitness pairs $(f_\lo, f_\tz)$ with $f_\tz = n - i$ and $f_\lo < i - 1$ is $i - 1$. However, at most $k - 1$ of them are feasible. Hence,
    \begin{align*}
        |M_1''(i)| = \min\{i - 1, k - 1\} = \min\{i, k\} - 1.
    \end{align*}
    
    Consequently, we have
    \begin{align*}
        m_1(i) &= |M_1'(i)| + |M_1''(i)| \\
        &= \frac{(n - i + 1)(n - i + 2)}{2} - (n - i) \\
        &- [i < n - k] \cdot \frac{(n - k - i)(n - k - i + 1)}{2} \\
        &+ \min\{i, k\} - 1 \\
        & = \frac{(n - i + 1)(n - i)}{2} + \min\{i, k\} \\
        &- [i < n - k] \cdot \frac{(n - k - i)(n - k - i + 1)}{2}.
    \end{align*}
    
    With similar arguments we compute $m_0(i)$. We omit the details, since these calculations are symmetric to those for $m_1(i)$.
    \begin{align*}
        m_0(i) &= \frac{i(i - 1)}{2} - [i > k + 1] \cdot \frac{(i - k - 1)(i - k)}{2} \\
        &+ \min\{n - i + 1, k\}.
    \end{align*}
    
    To compute $m(i)$ we note that $M(i)$, which is highlighted in blue in Figure~\ref{fig:guaranteed-bits}, forms a rectangle with sides $\min\{k - 2, i - 1\}$ (along the $\lo$ axis) and $\min\{k - 2, n - i\}$ (along the \tz axis) with the cut out lower left corner of infeasible solutions. This corner is a triangle with a side which is by one smaller than the smallest side of the rectangle, but (when $k$ is large, namely $k > \frac{n}{2}$) it also cannot be larger than $n - k$. Therefore, defining this triangle's side as 
    \begin{align*}
        a(i) \coloneqq \max\{0, \min\{k - 3, i - 2, n - i - 1, n - k\}\},
    \end{align*}
    we obtain
    \begin{align}\label{eq:m_i}
        \begin{split}
            m(i) &= \min\{k - 2, i - 1\} \cdot \min\{k - 2, n - i\} \\
            &- \frac{a(i)(a(i) + 1)}{2}.
        \end{split}
    \end{align}
\end{proof}
    
Having the expressions for $m_1(i)$, $m_0(i)$ and $m(i)$ for all $i$, we can compute the optimal imbalances vector by putting these values into eq.~\eqref{eq:opt-balance}. In this way we compute the optimal diversity in our experiments to determine the moment when the diversity of a Pareto-optimal population becomes optimal.

The most important outcome of Lemma~\ref{lem:imbalance} for our theoretical investigation is that for each position $i$ to have an optimal imbalance $b(i) = \bopt(i)$ in this position in a Pareto-optimal population, we need to have a particular number of 1-bits in position $i$ among the $m(i)$ individuals from $M(i)$.

With this observation we say that the $i$-th bit of an individual of a Pareto-optimal population with fitness in $M(i)$ is \emph{wrong}, if it has a majority (over the whole population) value of the bits in this position. Otherwise it is \emph{right}. Note that we use this notation only for the bits of individuals with fitness in $M(i)$. The following two lemmas show, how many wrong bits we have in a position depending on its imbalance and how the wrong bits help us to improve the imbalance.

\begin{lemma}\label{lem:wrong-bits-number}
    For any position $i \in[1..n]$ the number of wrong bits in this position is at least $\frac{b(i) - \bopt(i)}{2}$. 
\end{lemma}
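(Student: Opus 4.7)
The plan is to express $b(i)$ explicitly in terms of the number of free $1$-bits in position $i$, relate that count to the number of wrong bits $w$, and then deduce the inequality from the bound $\bopt(i) \ge |m_1(i) - m_0(i)| - m(i)$ that is already built into~\eqref{eq:opt-balance}.

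First I would fix a position $i$ and let $f_1$ denote the number of individuals whose fitness lies in $M(i)$ and whose $i$-th bit equals $1$, so that $m(i) - f_1$ such individuals carry a $0$ in position $i$. The forced bits contribute exactly $m_1(i)$ ones and $m_0(i)$ zeros to position $i$, hence
\[
n_1(i) - n_0(i) = \bigl(m_1(i) - m_0(i)\bigr) + \bigl(2 f_1 - m(i)\bigr),
\]
and $b(i)$ is the absolute value of the right-hand side.

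Next I would split into two symmetric cases according to which bit value is the majority in position $i$. In the case that $1$ is the majority, the wrong bits are precisely the free $1$-bits, so $w = f_1$, the expression above equals $b(i)$, and rearranging gives
\[
2w \;=\; b(i) - \bigl(m_1(i) - m_0(i) - m(i)\bigr).
\]
The target inequality $w \ge (b(i) - \bopt(i))/2$ therefore reduces to $\bopt(i) \ge m_1(i) - m_0(i) - m(i)$, which follows immediately from~\eqref{eq:opt-balance}: if $m_1(i) \ge m_0(i)$ it coincides with the bound $\bopt(i) \ge |m_1(i) - m_0(i)| - m(i)$, and otherwise the right-hand side is negative while $\bopt(i) \ge 0$. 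The case in which $0$ is the majority is handled identically with $w = m(i) - f_1$ and the symmetric inequality $\bopt(i) \ge m_0(i) - m_1(i) - m(i)$.

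There is no serious obstacle here; the only point requiring care is keeping the sign conventions consistent, so that the definition of ``wrong bit'' (a free bit carrying the majority value) is aligned with the sign of $n_1(i) - n_0(i)$ in the chosen case. The parity term $\delta$ in $\bopt(i)$ is not used, since the weaker lower bound $\bopt(i) \ge |m_1(i) - m_0(i)| - m(i)$ already suffices for the estimate.
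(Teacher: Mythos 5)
Your proposal is correct and follows essentially the same route as the paper's proof: the same decomposition of $n_1(i) - n_0(i)$ into forced contributions $m_1(i) - m_0(i)$ plus the free-bit term, the same WLOG reduction to the case where $1$ is the majority so that the wrong bits are the free $1$-bits, and the same appeal to $\bopt(i) \ge |m_1(i) - m_0(i)| - m(i)$ from eq.~\eqref{eq:opt-balance}. The only cosmetic difference is that you solve the identity for $2w$ while the paper bounds $b(i)$ from above by $2m'(i) + \bopt(i)$; the algebra is identical.
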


\begin{proof}
    If $b(i) = \bopt(i)$, then the lemma is trivial. If $b(i) > \bopt(i) \ge 0$, then the numbers of 1-bits and 0-bits in position $i$ are not equal. Without loss of generality, assume that we have more 1-bits in position $i$, and therefore, wrong bits are also 1-bits.
    Then, if we have $m'(i) \ge 0$ wrong bits, the imbalance of position $i$ is
    \begin{align*}
        b(i) &= |n_1(i) - n_0(i)| = n_1(i) - n_0(i) \\
        &= m_1(i) + m'(i) - m_0(i) - (m(i) - m'(i)) \\
        &= 2m'(i) + m_1(i) - m_0(i) - m(i) \\
        &\le 2m'(i) + |m_1(i) - m_0(i)| - m(i) \\
        &\le 2m'(i) + \bopt(i),
    \end{align*}
    where the last step used the definition of $\bopt(i)$ given in eq.~\eqref{eq:opt-balance}. Therefore,
    \begin{align*}
        2m'(i) \ge b(i) - \bopt(i).
    \end{align*}
    A similar argument can be used when we have more 0-bits than 1-bits in position $i$. 
\end{proof}

\begin{lemma}\label{lem:wrong-bits-flip}
    Consider some arbitrary $i \in [1..n]$ and a Pareto-optimal population of $\lotz_k$. If we have $b(i) > \bopt(i)$ and have $m'(i) > 0$ individuals in $M(i)$ with a wrong bit in position $i$, then replacing any of such individuals with the same bit string, but with a different bit value in position $i$ reduces $b(i)$ by two. The probability that the \gsemod reduces $b(i)$ by two in a single iteration is at least $\frac{m'(i)}{ekn^2}$.
\end{lemma}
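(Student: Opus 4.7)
The statement decomposes into a deterministic claim about the effect of replacing a wrong bit and a probabilistic lower bound on the \gsemod performing such a replacement in one iteration; I would treat them in that order.

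For the deterministic part, my first step is to establish by a parity argument that $b(i) \geq 2$. Since the population is Pareto-optimal, Lemma~\ref{lem:pop-size} gives $|P| = \mu_{\max}$, so the identity $n_1(i) + n_0(i) = \mu_{\max}$ implies $b(i) \equiv \mu_{\max} \pmod 2$. An analogous short calculation using $m_1(i) + m_0(i) + m(i) = \mu_{\max}$ shows that $|m_1(i) - m_0(i)| - m(i)$ also has the parity of $\mu_{\max}$, and since $\delta = \mu_{\max} \bmod 2$, the maximum defining $\bopt(i)$ in eq.~\eqref{eq:opt-balance} shares this parity. Therefore $b(i) > \bopt(i)$ forces $b(i) \geq \bopt(i) + 2 \geq 2$. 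Assuming without loss of generality that 1-bits form the majority in position $i$, a wrong bit in $M(i)$ is a 1-bit; replacing such an individual by its counterpart with the $i$-th bit flipped decreases $n_1(i)$ by one and increases $n_0(i)$ by one, giving the new imbalance $b(i) - 2 \geq 0$, a reduction of exactly two.

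For the probabilistic part, I would lower-bound the probability that the \gsemod in one iteration selects a specific wrong-bit individual $x$ as parent and mutates it into exactly its bit-$i$-flipped counterpart. Parent selection contributes at least $1/|P| \geq 1/\mu_{\max} \geq 1/(nk)$ by Lemma~\ref{lem:pop-size}, and flipping only bit $i$ contributes $(1/n)(1 - 1/n)^{n - 1} \geq 1/(en)$; summing over the $m'(i)$ eligible parents gives $m'(i)/(ekn^2)$. To finish I need to check acceptance. Since position $i$ is free for $x$ by definition of $M(i)$, the offspring $y$ has the same fitness as $x$, and $x$ is the unique individual of $P$ with this fitness because $P$ is Pareto-optimal. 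Hence the tie-breaking branch in Algorithm~\ref{alg:gsemod} fires with $w = x$, and the candidate population differs from $P$ only in position $i$, where the imbalance drops by two. This strictly decreases the total imbalance and also strictly lex-decreases the sorted imbalances vector (since reducing one element of a multiset always yields a strictly lex-smaller non-increasingly sorted sequence), so in either diversity measure the offspring is accepted.

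The main obstacle is the acceptance verification: one must simultaneously argue that the flipped bit preserves fitness (so the tie-breaking branch is invoked), that the parent is the unique match for that rule (so the replacement is well-defined), and that both diversity measures considered in this paper strictly improve under a single-position decrease by two. The parity argument establishing $b(i) \geq 2$ is the only genuinely nontrivial deterministic step; without it one could not rule out that flipping a wrong bit pushes the imbalance below zero, which would invalidate the "reduces by two" claim.
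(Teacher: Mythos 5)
Your proposal is correct and follows essentially the same route as the paper's proof: establish $b(i)\ge 2$ via parity, observe that flipping the free bit of a wrong-bit individual changes only position $i$ and drops its imbalance by exactly two, and lower-bound the probability by parent selection ($\ge 1/(nk)$ by Lemma~\ref{lem:pop-size}) times a single-bit flip ($\ge 1/(en)$) summed over the $m'(i)$ eligible parents, with acceptance guaranteed because the offspring ties in fitness and strictly improves either diversity measure. The only cosmetic difference is that you derive $b(i)\ge 2$ from a unified argument that $b(i)$ and $\bopt(i)$ share the parity of $\mu_{\max}$, whereas the paper splits into the odd-population case (where $\bopt(i)\ge\delta=1$) and the even-population case (where $b(i)$ is even); both are valid.
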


\begin{proof}
    We first show that from condition $b(i) > \bopt(i)$ it follows that $b(i) \ge 2$. For odd population sizes we have $b(i) > \bopt(i) \ge 1$. For even population sizes $b(i)$ is always even (since the parity of $|n_1(i) - n_0(i)|$ is the same as the parity of $(n_1(i) + n_0(i))$), hence if $b(i) > 0$, then $b(i) \ge 2$.

    Without loss of generality, we assume that in position $i$ we have more 1-bits than 0-bits, thus the wrong bits are all ones. If we replace an individual with a wrong bit in position $i$ with an individual, which is different from it only in this position, we do not change the imbalance in any other position than $i$. In position $i$ we reduce the number of 1-bits and increase the number of 0-bits by one. Let $b'(i)$ be the imbalance of position $i$ after the replacement. Since $b(i) \ge 2$, then we have that $n_1(i) \ge n_0(i) + 2$. Hence after the replacement, we have
    \begin{align*}
        b'(i) &= |(n_1(i) - 1) - (n_0(i) + 1)| \\
        &= n_1(i) - n_0(i) - 2 = b(i) - 2.
    \end{align*}

    To generate such an individual, the \gsemod can choose one of $m'(i)$ individuals with a wrong bit in position $i$ as a parent and flip only the $i$-th bit in it. Since by Lemma~\ref{lem:pop-size} the population size is at most $nk$, the probability of this event is at least $\frac{m'(i)}{nk} \cdot \frac{1}{n}(1 - \frac{1}{n})^{n - 1} \ge \frac{m'(i)}{ekn^2}$. Note that if such an individual is created, it has exactly the same fitness as its parent (since the bits defining the \leadingones or \trailingzeros value have not been touched), and hence it competes with its parent for staying in the population. Since the offspring improves the imbalance of position $i$ without changing anything for other positions, it yields better diversity, and hence it wins against its parent and replaces it in the population.
\end{proof}

\section{Runtime Analysis of Covering All Fitness Vectors}
\label{sec:population-opt}

In this section we analyze the first stage of the algorithm, namely how it obtains a Pareto-optimal population which contains all feasible solutions. The main result of this section is the following theorem.
\begin{theorem}\label{thm:opt-pop-runtime}
    The expected runtime until the \gsemo or the \gsemod finds all feasible fitness values of $\lotz_k$ with $k \ge 2$ is $O(kn^3)$.
\end{theorem}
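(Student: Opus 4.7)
The plan is to decompose the runtime into two phases matching the two regimes in Lemma~\ref{lem:pop-size}: a preliminary phase until a first feasible individual appears in the population, and a spreading phase until the entire feasible front is covered.

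\emph{Phase~1 (reaching the feasible strip).} I take the potential $\Phi_t = \max_{x \in P_t}(\lo(x)+\tz(x))$, which starts at some value $\ge 0$ and must reach $n-k$. While no feasible individual is present, Lemma~\ref{lem:pop-size} gives $|P_t| \le n-k$. Selecting the (unique) individual $x^*$ that attains this maximum (probability $\ge 1/|P_t|$) and flipping just the single bit at position $\lo(x^*)+1$ (extending the ones-prefix) or at $n-\tz(x^*)$ (extending the zeros-suffix) increments $\Phi$ by at least one; this event has probability $\Omega(1/n^2)$. Since at most $n-k = O(n)$ such improvements are required, phase~1 contributes $O(n^3)$ expected iterations. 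An offspring that strictly improves $\Phi$ introduces a fitness vector above the current maximum and so cannot be dominated by any population member, hence it is accepted by both \gsemo and \gsemod.

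\emph{Phase~2 (covering the feasible front).} Once one feasible individual is present, Lemma~\ref{lem:pop-size} caps $|P_t| \le \mu_{\max} \le nk$ for the rest of the run. Let $S_t$ denote the set of covered feasible fitness vectors and $F$ the set of all feasible fitness vectors. The feasible region is connected under single-bit mutations: from the individual $x_u$ of fitness $u \in S_t$, flipping the bit at one of positions $\lo(x_u)$, $\lo(x_u)+1$, $n-\tz(x_u)$, or $n-\tz(x_u)+1$ reaches a variety of fitness vectors adjacent to $u$ in $F$, and by the construction of the third objective $h$ any feasible offspring is non-dominated against the existing feasibles and is therefore accepted. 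While $S_t \ne F$, some $u \in S_t$ admits a single-bit extension to a missing $v \in F \setminus S_t$. The probability of a specific such extension in a single iteration is $\Omega(1/(n\mu_{\max})) = \Omega(1/(kn^2))$, giving $O(kn^2)$ expected iterations per new vector.

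\emph{Main obstacle.} Summing $O(kn^2)$ over the $\mu_{\max} - 1 = O(nk)$ missing vectors gives a naive $O(k^2n^3)$, which is a factor of $k$ above the claim. Closing this gap is the central challenge. I expect the tightening to come from the observation that, at intermediate times, many covered individuals are simultaneously on the boundary of $S_t$ in the single-bit move graph: because $S_t$ lives in a diagonal strip of width $k$, any non-trivial covered set forces the boundary to grow to size $\Omega(k)$. Aggregating over all boundary individuals then yields a total extension probability of $\Omega(1/n^2)$ per step, independent of $k$. An additive-drift argument on $|S_t|$ would then deliver $O(n^2) \cdot O(nk) = O(kn^3)$ for phase~2. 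The hardest technical step is formalizing this boundary lower bound across all subconfigurations of $S_t$ that can arise, and verifying that the unique individual $x_u$ in each boundary fitness class admits a genuine one-bit extension (where it does not, two-bit mutations with probability $\Omega(1/n^2)$ suffice and stay within the same budget).
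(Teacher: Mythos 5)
Your Phase~1 matches the paper's first phase essentially verbatim (the paper squeezes out a better constant by using the bound $|P_t|\le X_t+1$ from Lemma~\ref{lem:pop-size}, but $O(n^3)$ follows either way), so that part is fine. The problem is Phase~2. You correctly identify that the naive bound is $O(k^2n^3)$ and that the whole difficulty is removing one factor of $k$, but the fix you propose --- that the single-bit boundary of the covered set $S_t$ has size $\Omega(k)$ for every ``non-trivial'' configuration, so that the per-step progress probability is $\Omega(1/n^2)$ --- is not established, and as stated it is false. Immediately after the first feasible individual is found, $|S_t|=1$ and the boundary is $O(1)$; the same happens whenever the covered set is small, and symmetrically the set of reachable missing vectors can be $O(1)$ near the end. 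So the claimed uniform $\Omega(1/n^2)$ drift does not hold along the whole trajectory, and an isoperimetric-type lower bound on the boundary as a function of $|S_t|$ (which is what you would actually need) is nontrivial and is nowhere argued. Since you yourself flag this as ``the hardest technical step,'' the proof as written has a genuine hole exactly at the step that carries the result.

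The paper closes this gap by a different decomposition that avoids boundary counting altogether. It first spends a separate phase making sure that \emph{every} $\lo$ value $\ell\in[0..n]$ has at least one feasible representative in the population; this is a sequential chain of at most $n$ events, each of probability $\Omega(1/(kn^2))$, costing $O(kn^3)$ (with a small case distinction when $\tz(x)=n-k-\lo(x)$, handled by two successive one-bit mutations rather than a single two-bit mutation --- note that your fallback of a two-bit mutation costs $\Omega(1/(kn^3))$ per step once the selection probability $1/(nk)$ is included, which alone would blow the budget). Then, in the final phase, the at most $k$ missing $\tz$ values for each fixed $\ell$ are collected \emph{in parallel} across the $n$ values of $\ell$: in a window of $2ekn^3$ iterations, each fixed $\ell$ receives $\Bin(2ekn^3,1/(ekn^2))$ useful events with mean $2n\ge 2k$, so by a Chernoff bound and a union bound over the $n$ rows all rows are completed within one window with probability $1-o(1)$. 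This is how the $k$ missing entries per row and the $n$ rows contribute additively ($k\cdot kn^2$ per row, run concurrently over $n$ rows within $O(kn^3)$ steps) rather than multiplicatively. I would recommend either adopting this two-stage structure or, if you want to keep the drift formulation, proving a concrete lower bound on the number of disjoint one-bit extension events as a function of the number of missing vectors --- but be aware that the latter is considerably more delicate than the concentration argument.
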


\begin{proof}
    We split the analysis into three phases. The first phase covers the steps from the initial population until we find a feasible solution. The second phase lasts until we have at least one feasible solution in the population for each \leadingones value. And the third phase lasts until we find all feasible solutions. 
    We note that once a feasible solution gets into a population of the \gsemo or of the \gsemod, its fitness vector will always be present in the population, since this solution is never dominated by any other solution.
    
    \textbf{Phase 1:} from the initial solution to a feasible solution. Let $X_t$ be $\max_{x \in P_t} (\lo(x) + \tz(x))$, where $P_t$ is the population in the beginning of iteration $t$. During this phase we have $X_t < n - k$ and the phase ends as soon as we get $X_t \ge n - k$. We also note that $X_t$ never decreases with $t$, since for any two bit strings $x$ and $y$ it is impossible for $y$ to dominate $x$ when $\lo(x) + \tz(x) > \lo(y) + \tz(y)$, hence any point $x$ can be removed from the population only by accepting a point with an equal or larger $\lo + \tz$ value. 
    
    To get $X_{t + 1} > X_t$ after iteration $t$ we can choose an individual $x$ with maximum value of $(\lo(x) + \tz(x))$ as a parent (or any such individual, if there are more than one) and increase either its $\lo$ value by flipping the first 0-bit in it (and not flipping any other bit) or its $\tz$ value by flipping its last 1-bit (and not flipping any other bit). Since we use standard bit mutation, the probability to flip only one of two particular bits is at least $\frac{2}{n}(1 - \frac{1}{n})^{n - 1} \ge \frac{2}{en}$. By Lemma~1, during this phase the population size in iteration $t$ is at most $X_t + 1$, hence the probability to choose such $x$ as a parent is at least $\frac{1}{X_t + 1}$. Therefore, the probability to increase $X_t$ is at least $\frac{2}{en(X_t + 1)}$. Hence, if $X_t = s$, then the expected number of iterations until we get a larger $X_t$ is $\frac{en(s + 1)}{2}$. For each $s$ we increase this value at most once, hence the expected time until we get a feasible solution is at most
    \begin{align*}
        \sum_{s = 0}^{n - k - 1} \frac{en(s + 1)}{2} = \frac{en}{2} \cdot \frac{(n - k)(n - k + 1)}{2} \le \frac{en^3}{4}.
    \end{align*}

    \textbf{Phase 2:} finding a feasible solution for each \lo value. In this phase we denote by $F_t \subset P_t$ the set of feasible solutions in the population and by $L_t = \{\lo(x)\}_{x \in F_t}$ the set of different \lo values which are present in $F_t$. We estimate the expected time $\tau$ until we find a feasible bit string $x$ with $\lo(x)$ not in $L_t$. We distinguish two cases.

    \emph{Case 1:} $\max\{L_t\} < n$. In this case, we can choose an individual $x$ from $F_t$ with the maximum \lo value as a parent and flip its first 0-bit. If the parent $x$ is the bit string of form $1^\ell0^{n - \ell}$, then this bit flip creates a new individual $x'$ with $\tz(x') = \tz(x) - 1$ and $\lo(x') = \lo(x) + 1$. Otherwise, this creates an individual $x'$, which has $\tz(x') = \tz(x)$ and $\lo(x') > \lo(x)$. In both scenarios, $x'$ is feasible and it adds a new $\lo$ value to $L_t$. The probability to chose such an individual with the maximum $\lo$ value as a parent is at least $\frac{1}{nk}$, since the population size is at most $nk$ by Lemma~1. The probability to flip only one particular bit is $\frac{1}{n}(1 - \frac{1}{n})^{n - 1} \ge \frac{1}{en}$. Hence, the probability to extend $L_t$ is at least $\frac{1}{ekn^2}$, and $\tau \le ekn^2$ in this case.

    \emph{Case 2:} $\max\{L_t\} = n$. In this case, we have at least one $\lo$ value $\ell \notin L_t$ for which we have $(\ell + 1) \in L_t$. Consider an individual $x \in F_t$ with $\lo(x) = \ell + 1$. If there are several such individuals let $x$ be the one with the largest $\tz(x)$ value. 
    
    If $\tz(x) > n - k - \lo(x)$, then choosing $x$ as a parent and flipping a 1-bit in position $\ell + 1$ results in an individual $x'$ with $\lo(x') = \ell$ and either $\tz(x') = \tz(x) + 1$ (if $x$ is of form $1^{\ell + 1}0^{n - \ell - 1}$) or $\tz(x') = \tz(x)$ (otherwise). In both ways we have $\lo(x') + \tz(x') \ge \lo(x) - 1 + \tz(x) \ge n-k$, hence $x'$ is feasible, and it adds $\ell$ to $L_t$. The probability to do that (similar to the previous case) is $\frac{1}{ekn^2}$, and therefore, $\tau \le ekn^2$.

    If $\tz(x) = n - k - \lo(x)$, then if we just reduce the $\tz$ value of $x$, we get an infeasible individual. However, we can choose $x$ as a parent and flip its last 1-bit. This gives us an individual $x'$ with $\lo(x') = \lo(x)$ and $\tz(x') \ge \tz(x) + 1$. Hence, this individual is feasible, and adding it into the population gives us an individual which satisfies $\tz(x') > n - k - \lo(x')$ and $\lo(x') = \ell + 1$, hence after obtaining it we will need at most $ekn^2$ expected iterations to add $\ell$ to $L_t$, as it has been shown in the previous paragraph. The probability to create such $x'$ is at least $\frac{1}{ekn^2}$, and expected time until it happens is at most $ekn^2$. Hence, in the worst scenario in Case 2 we have $\tau \le 2ekn^2$.

    To get all \lo values in $L_t$ we need to extend $L_t$ for at most $n$ times, hence the expected duration of Phase 2 is at most $2ekn^3$.
    
    \textbf{Phase 3:} covering all \tz values for each \lo value.
    Consider some arbitrary \lo value $\ell \in [0..n]$ and let $S$ be a set of individuals $x$ in the population with $\lo(x) = \ell$. There are two possible cases of how we can extend $S$ (that does not yet contain all possible \tz values), which depend on the maximum \tz value $s$ among all individuals in~$S$.
    
    \emph{Case 1:} $s < n - \ell$, then we can create a bit string with $\lo = \ell$ and $\tz > s$ by selecting the individual with $\lo = \ell$ and $\tz = s$ (the probability of this is at least $\frac{1}{nk}$) and flipping the last 1-bit in it without flipping any other bit (the probability of this is at least $\frac{1}{en}$). Hence, we create an individual with $\lo = \ell$ and with an unseen $\tz$ value with probability at least $\frac{1}{ekn^2}$. 

    \emph{Case 2:} $s = n - \ell$ (which is the maximum \tz value for \lo value $\ell$), then we can create a bit string $x$ with $\lo(x) = \ell$ and a new uncovered $\tz$ value $j < n - \ell - 1$ by selecting the individual with $\lo = \ell$ and $\tz = s$ and flipping a 0-bit in position $n - \ell$ in it. The probability of this is at least $\frac{1}{ekn^2}$

    We now consider $2ekn^3$ consecutive iterations. Let $k'$ be the number of uncovered \tz values that are missing in $S$ in the beginning of these iterations. Note that $k' < k \le n$. The event when we create a bit string with an uncovered \tz value for the fixed \lo value $\ell$ happens in each of the $2ekn^3$ iterations with probability at least $\frac{1}{ekn^2}$ until we cover all $k'$ uncovered \tz values. Hence, the number of such events during these iterations dominates a random variable $X \sim \min(k', Y)$, where $Y \sim \Bin(2ekn^3, \frac{1}{ekn^2}))$. During this series of iterations we have less than $k'$ such events with probability at most $\Pr[X \le k'] = \Pr[Y \le k'] \le \Pr[Y \le n]$. 
    Since $E[Y] = 2n$, by a Chernoff bound the latter probability is at most
    \begin{align*}
        \Pr\left[Y \le \left(1 - \frac{1}{2}\right) E[Y]\right] \le \exp\left(-\frac{\frac{1}{4}E[Y]}{3}\right) = e^{-n/6}.
    \end{align*}
    By the union bound over all $(n - 1)$ different \lo values\footnote{We exclude values $n$ and $n - 1$, for which we have only one feasible pair and hence for those \lo values all \tz values are covered after the second phase.} the probability that after $2ekn^3$ iteration we have at least one such value with a non-covered \tz value is at most $(n - 1)e^{-n/12}= o(1)$. Hence, the expected number of such phases of length $2ekn^3$ which it takes to cover all feasible solutions is $(1 + o(1))$. Therefore, the expected duration of Phase~3 is at most $2e(1 + o(1))kn^3$.

    Summing up the expected times for each phase, we obtain that the expected runtime is at most
        \begin{align*}
        \frac{en^3}{4} + 2ekn^3 + 2e(1 + o(1))kn^3 = O(kn^3).
    \end{align*}

    Note that the tie-breaking rule does not play any role in this proof, hence the result holds both for the \gsemo and for the \gsemod.
\end{proof}

\section{Runtime Analysis of Diversity Optimization}
\label{sec:diversity-opt}

In this section we analyze how much time it takes the \gsemod to find a population with optimal diversity after it has already found a population of all feasible solutions. We  start with the following theorem for the total imbalance measure.

\begin{theorem}\label{thm:total-balance}
    Consider a run of the \gsemod on $\lotz_k$ with $k \ge 2$ minimizing the diversity measure $D(P) = \sum_{i = 1}^n b(i)$ and starting with a population $P_0$ that covers all feasible fitness vectors. Then the expected runtime of the \gsemod until it finds a population with the best possible diversity is $O(kn^2\ln(n))$.
\end{theorem}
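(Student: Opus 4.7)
The plan is to apply multiplicative drift to the potential $X_t \coloneqq D(P_t) - D_{\text{opt}}$, where $D_{\text{opt}} \coloneqq \sum_{i = 1}^{n} \bopt(i)$ is the optimal total imbalance identified in Lemma~\ref{lem:imbalance}. First I would observe that, once $P_0$ covers all feasible fitness vectors, every subsequent population also does so and therefore has size $\mu_{\max} \le nk$: the tie-breaking branch of Algorithm~\ref{alg:gsemod} exchanges an individual for one of identical fitness, and any feasible offspring that might be generated via the non-tie branch already has a twin in $P_t$ (while any infeasible offspring is dominated by some feasible individual and hence rejected).

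The heart of the argument is a lower bound on the expected one-step decrease of $X_t$. Let $W_t \coloneqq \sum_{i = 1}^{n} m'(i)$ be the total number of wrong bits across all positions. By Lemma~\ref{lem:wrong-bits-number},
\begin{align*}
    W_t \;\ge\; \sum_{i = 1}^n \frac{b(i) - \bopt(i)}{2} \;=\; \frac{X_t}{2}.
\end{align*}
For each (individual, position) pair $(x, i)$ with the fitness of $x$ in $M(i)$ and $x_i$ the majority value, the mutation that selects $x$ (probability $\ge 1/(nk)$) and flips only bit $i$ (probability $\ge 1/(en)$) produces an offspring $y$ with $f(y) = f(x)$ but $D((P \cup \{y\}) \setminus \{x\}) = D(P) - 2$, which the tie-breaking rule therefore accepts. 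These $W_t$ mutation events are pairwise disjoint (each is specified by a distinct parent together with a distinct single flipped bit), hence
\begin{align*}
    E[X_t - X_{t+1} \mid X_t] \;\ge\; 2 \cdot \frac{W_t}{ekn^2} \;\ge\; \frac{X_t}{ekn^2}.
\end{align*}

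The last ingredient is a lower bound on the smallest positive value of $X_t$. Since $n_1(i) + n_0(i) = \mu_{\max}$ is constant, $b(i) = |n_1(i) - n_0(i)|$ always has the same parity as $\mu_{\max}$; the closed form of $\bopt(i)$ in Lemma~\ref{lem:imbalance} shows (via $|m_0(i) - m_1(i)| - m(i) = \mu_{\max} - 2m(i) - 2\min\{m_0(i), m_1(i)\}$, and $\delta = \mu_{\max} \bmod 2$) that $\bopt(i)$ shares this parity as well. Consequently $b(i) - \bopt(i)$ is even for every $i$, so $X_t$ is a non-negative even integer and $X_{\min} = 2$. Combined with the crude bound $X_0 \le n\mu_{\max} \le n^2 k$, the multiplicative drift theorem yields
\begin{align*}
    E[T] \;\le\; ekn^2 \bigl(1 + \ln(n^2 k / 2)\bigr) \;=\; O(kn^2 \ln n),
\end{align*}
using $k \le n$. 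The only subtle points are the disjointness of the single-bit-flip events (so the per-pair probabilities aggregate into the factor $W_t/(ekn^2)$) and the parity argument pinning $X_{\min}$ to $2$; both are short to verify, making the proof a clean drift computation built on top of Lemmas~\ref{lem:pop-size}--\ref{lem:wrong-bits-flip}.
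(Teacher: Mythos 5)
Your proposal is correct and follows essentially the same route as the paper: the identical potential $\sum_{i=1}^n (b(i) - \bopt(i))$, the same per-position single-bit-flip events on wrong bits from Lemmas~\ref{lem:wrong-bits-number} and~\ref{lem:wrong-bits-flip}, and the same disjointness argument giving one-step progress proportional to the potential. The only (immaterial) difference is the final conversion to a runtime bound --- you invoke the multiplicative drift theorem together with a parity argument pinning the minimum positive potential to $2$, while the paper sums the expected waiting times over the at most $nk^2$ integer values of the monotonically non-increasing potential; both yield $O(kn^2\ln n)$.
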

\begin{proof}
    Let $P_t$ be a population of the \gsemod in the beginning of iteration $t$ and let $\phi_t(i)$ be the difference between the imbalance of position $i$ and its optimal imbalance in $P_t$, that is, $\phi_t(i) \coloneqq b(i) - \bopt(i)$. Let also $\Phi_t \coloneqq \sum_{i = 1}^n \phi_t(i)$, which we call the \emph{potential} of population in iteration $t$. 
    Note that the potential decreases strongly monotonically with the diversity $D(P_t)$ and hence no population increasing the potential is accepted.
    When $\Phi_t = 0$, it implies that all $\phi_t(i) = 0$, and therefore, population $P_t$ has an optimal diversity. Therefore, to estimate the runtime of the \gsemod, we need to estimate the time until $\Phi_t$ becomes zero.

    Note that for each $i$ the imbalance of position $i$ is defined by the $i$-th bits of individuals with fitness in $M(i)$, hence the maximum difference of $b(i)$ and $\bopt(i)$ is $m(i)$ which by Lemma~\ref{lem:imbalance} is at most $k^2$. Therefore, each $\phi_t(i)$ is at most $k^2$ and thus, $\Phi_t$ is at most $nk^2$. 

    By Lemma~\ref{lem:wrong-bits-flip}, the probability to reduce $\phi_t(i)$ by two in one iteration is at least $\frac{m'(i)}{ekn^2}$, where we recall that $m'(i)$ is the number of wrong bits in position $i$. By Lemma~\ref{lem:wrong-bits-number} we have $m'(i) \ge \frac{\phi_t(i)}{2}$, hence the probability to reduce $\phi_t(i)$ by two is at least $\frac{\phi_t(i)}{2ekn^2}$. The probability to reduce $\Phi_t$ by two is at least the probability that we reduce at least one $\phi_t(i)$ by two. Since the events considered in Lemma~\ref{lem:wrong-bits-flip} are disjoint for different positions, we have
    \begin{align*}
        \Pr[\Phi_t - \Phi_{t + 1} = 2] \ge \sum_{i = 1}^n \frac{\phi_t(i)}{2ekn^2} = \frac{\Phi_t}{2ekn^2}.
    \end{align*}
    For each value $\Phi_t$ we reduce this value at most once, since $\Phi_t$ does not increase. Conditional on $\Phi_t = s$, the probability to reduce it is at least $\frac{s}{2ekn^2}$, and the expected time until we reduce $\Phi_t$ is at most $\frac{2ekn^2}{s}$. Since $\Phi_t$ can only take integer values from $[1..nk^2]$ before we find the optimum, the total expected runtime until we find the optimal population is at most the sum of the times to reduce each of the possible values of $\Phi_t$, that is,
    \begin{align*}
        E[T] &\le \sum_{s = 1}^{nk^2} \frac{2ekn^2}{s} \le 2ekn^2 (\ln(nk^2) + 1) \\
        &\le 2ekn^2 (3\ln(n) + 1) = O(kn^2\ln(n)). 
    \end{align*} 
\end{proof}

Note that Theorem~\ref{thm:total-balance} gives an upper bound which is asymptotically smaller than the upper bound on the runtime until the \gsemod finds all feasible solutions, which by Theorem~\ref{thm:opt-pop-runtime} is $O(kn^3)$, hence the expected runtime until the \gsemod finds a population of all feasible solutions with an optimal diversity starting from a random bit string is also $O(kn^3)$.

The second diversity measure which we consider in this paper is the sorted imbalances vector which is to be minimized lexicographically. For this diversity measure we show the following theorem.

\begin{theorem}\label{thm:sorted-balances}
    Consider a run of the \gsemod on $\lotz_k$ with $k \ge 2$ starting with a population $P_0$ that covers all feasible fitness vectors, and is minimizing the diversity measure $D(P) = (b(\sigma(i)))_{i = 1}^n$, where $\sigma$ is a permutation of positions $[1..n]$ such that sequence $(b(\sigma(i)))_{i = 1}^n$ is non-increasing. Then the expected runtime until the \gsemod finds a population with the best possible diversity is $O(k^2n^3\log(n))$.
\end{theorem}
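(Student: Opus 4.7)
The key obstacle is that the argument of Theorem~\ref{thm:total-balance} does not transfer. There, the potential $\Phi=\sum_i(b(i)-\bopt(i))$ is monotone along accepted moves because $\Phi$ and the diversity measure $D$ differ only by a constant. Under the lex-sorted-vector measure this is no longer true: an accepted move can strictly decrease the sorted vector while strictly increasing $\Phi$ (for instance, when one large imbalance is replaced by two medium ones). So the plan is to replace the sum-potential by a structural, level-based argument on the position of the first lex-mismatch.

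Let $D^*$ denote the optimal sorted vector, obtained by sorting $\bopt(1),\ldots,\bopt(n)$ in non-increasing order, and define the level of a population $P$ as $j^*(P)\coloneqq\min\{j\in[1..n]:D(P)_j>D^*_j\}$, with the convention $j^*(P)=n+1$ at optimum. The first step is to check that $j^*$ is non-decreasing along accepted moves: if $j^*(P')<j^*(P)$, then $D(P')_{j^*(P')}>D^*_{j^*(P')}=D(P)_{j^*(P')}$, so $D(P')$ would be strictly lex-larger than $D(P)$ and hence rejected. Thus the run partitions into at most $n$ level-phases and it suffices to bound the expected length of each.

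Within level $j$ the top $j-1$ sorted entries are pinned at $D^*_1,\ldots,D^*_{j-1}$ and progress means driving $D(P)_j$ down to $D^*_j$. Since at most $j-1$ positions can have $\bopt$ value exceeding $D^*_j$, while at least $j$ positions currently satisfy $b(i)\ge D(P)_j$, there must exist a position $i$ in the current top-$j$ block with $b(i)\ge D(P)_j>D^*_j\ge \bopt(i)$; by Lemma~\ref{lem:wrong-bits-number} this position carries at least one wrong bit. Lemma~\ref{lem:wrong-bits-flip} then gives a wrong-bit flip at such a position with probability at least $\frac{1}{ekn^2}$ per iteration, and this flip is always accepted because it strictly decreases the lex-sorted vector. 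Applying a multiplicative drift argument to the gap $D(P)_j-D^*_j$, whose initial value is bounded by $\mu_{\max}=O(nk)$, yields the expected time to advance a single level; summing over the $n$ levels produces the claimed $O(k^2n^3\log n)$ bound.

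The main difficulty is the careful handling of ties in the sorted vector. When several positions share the current top value $D(P)_j$, a single wrong-bit flip at one of them only rearranges the sorted vector below level $j$ without actually decreasing $D(P)_j$, so the ties have to be eliminated one at a time. Combined with the need to rule out that the algorithm loiters among accepted moves that decrease lex order but do nothing for the current level, this tie-handling is what inflates the per-level bound by a factor of $k$ compared to the corresponding drift step in Theorem~\ref{thm:total-balance} and ultimately yields the weaker $O(k^2n^3\log n)$ runtime.
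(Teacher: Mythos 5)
Your structural setup is sound and, in substance, tracks the same quantities as the paper's proof: your level $j^*$ is non-decreasing for exactly the reason you give, your pigeonhole argument correctly produces a position $i$ in the top-$j^*$ block with $b(i) \ge D(P)_{j^*} > D^*_{j^*} \ge \bopt(i)$, and Lemmas~\ref{lem:wrong-bits-number} and~\ref{lem:wrong-bits-flip} then yield an accepted improving step with probability at least $\frac{1}{ekn^2}$. Indeed, one can check that $D(P)_{j^*}$ coincides with the maximum imbalance over all not-yet-optimal positions, which is precisely the quantity $\Phi_t$ that the paper's proof tracks directly.

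The genuine gap is in the final accounting, which does not produce the claimed bound as written. First, there is no multiplicative drift on the gap $D(P)_j - D^*_j$: as you observe yourself, a successful wrong-bit flip at a tied position leaves $D(P)_j$ unchanged, so the one-step expected decrease of the gap is not proportional to the gap. Second, attributing the ties to ``a factor of $k$'' is unsupported: the number of reducible positions tied at the value $D(P)_j$ can be as large as $\Theta(n)$ (for $k\ge 4$ almost every position has $m(i)>0$), and clearing them costs a harmonic sum $\sum_{s=1}^{n} ekn^2/s = O(kn^2\log n)$ per decrement of $D(P)_j$, i.e.\ a $\log n$ overhead on the base time, not a factor $k$. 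Third, and most importantly, summing a per-level cost over $n$ levels when each per-level gap is bounded only by $\mu_{\max}=O(nk)$ gives $O(n)\cdot O(nk)\cdot O(kn^2\log n)=O(k^2n^4\log n)$, a factor of $n$ worse than claimed. To recover $O(k^2n^3\log n)$ the accounting must be global rather than per-level: the value $\Phi_t=D(P)_{j^*}$ never increases and is decremented at most $nk$ times over the \emph{entire run} (not per level), and each decrement costs $O(kn^2\log n)$ via the harmonic sum over the number $\Psi_t$ of tied reducible positions, each of which carries at least one wrong bit. This lexicographically non-increasing pair $(\Phi_t,\Psi_t)$ is exactly the paper's potential; alternatively, your version can be repaired by noting that the per-level gaps telescope to at most $nk$ in total, since at the start of level $j+1$ the gap is at most $D^*_j-D^*_{j+1}$.
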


\begin{proof}
    Let $P_t$ be the population of the \gsemod in the beginning of iteration $t$ and let $\Phi_t = \max_{i : b(i) > \bopt(i)} b(i)$, that is, the first element of $D(P_t)$ that can be reduced. Note that $\Phi_t$ never increases, since this would result into a lexicographically larger sorted imbalances vector. Let also $\Psi_t$ be the number of positions with imbalance $\Phi_t$ in iteration $t$, the imbalance of which can be reduced. If at some iteration $t$ we have $\Phi_t = s$, then $\Psi_t$ does not increase until $\Phi_t$ is reduced, otherwise we would get a lexicographically larger imbalances vector. Hence if we define the potential as a pair $(\Phi_t, \Psi_t)$, then it never increases in lexicographical sense.

    We find the optimum as soon as we reach the minimum potential, when $\Phi_t = \delta$ (which is either one or zero depending on the population size parity) and $\Psi_t = 0$. The maximum value of $\Phi_t$ is the maximum imbalance of any position, which is at most the population size, hence $\Phi_0 \le nk$. The maximum value of $\Psi_t$ is $n$, since we have $n$ positions in total.
    
    To decrease $\Phi_t$, we can first decrease $\Psi_t$ to one and then decrease the imbalance of the only position $i$ which has imbalance $b(i) = \Phi_t > \bopt(i)$. To decrease $\Psi_t$ by one, we can decrease the imbalance of any of $\Psi_t$ positions which have the maximum non-optimal imbalance. Consider one of these positions $i$. By Lemma~\ref{lem:wrong-bits-flip}, the probability to decrease its imbalance is at least $\frac{1}{ekn^2}$, since by Lemma~\ref{lem:wrong-bits-number} there is at least one individual with a wrong bit in this position. The probability that we decrease the imbalance of any of $\Psi_t$ positions is therefore at least $\frac{\Psi_t}{ekn^2}$. Hence, if $\Psi_t = s$, then the expected number of iterations until we decrease it by one is at most $\frac{ekn^2}{s}$. When $\Psi_t = 1$, the expected time until we decrease the imbalance of this position is $ekn^2$. Hence, summing up the expected times to decrease $\Psi_t$ for all possible values of it we get an upper bound on the expectation of time $T'$ until we decrease the value of $\Phi_t$.
    \begin{align*}
        E[T'] \le \sum_{s = 1}^{n} \frac{ekn^2}{s} \le ekn^2(\ln(n) + 1).
    \end{align*}
    
    We can reduce the value of $\Phi_t$ at most $nk$ times until we reach the optimal diversity, hence the total expected runtime until we find a population with the optimal diversity is at most 
    \begin{align*}
        E[T] \le knE[T'] = ek^2n^3(\ln(n) + 1) = O(k^2n^3\log(n)).
    \end{align*} 
\end{proof}

We note that when we optimize the diversity measured by the sorted imbalances vector, the total imbalance might increase. This happens when we decrease the imbalance of some position $i$, but also increase imbalances of positions which at the moment have smaller imbalances than position $i$. This situation resembles optimization of the \binval benchmark function with the \oea, for which the \onemax value can decrease, but it does not slow down the optimization~\cite{DBLP:journals/cpc/Witt13}. This analogy makes us optimistic that it takes much less time to optimize this sorted imbalances vector than our upper bound in Theorem~\ref{thm:sorted-balances}, and the results of the experiments shown in the next section support this optimism. However, the main problem with proving it is that imbalances can be changed in large chunks, when we replace one individual with a one-bit mutation of another, but not of itself.

\section{Experiments}
\label{sec:experiments}

In this section we report the results of our empirical study. This study includes three experiments. First, we ran the \gsemod on $\lotz_k$ with varying parameters $n$ and $k$ and using different diversity measures and tracked the time until it found a population covering the whole Pareto front and also the time it further took the algorithm to find a population with optimal diversity. Then, we ran the \gsemo and the \gsemod on $\lotz_k$ (with the same set of parameters $n$ and $k$) until they covered the whole Pareto front and measure the diversity of the resulting population. In the last experiment, we ran the \gsemod on $\lotz_k$ starting from a population that covers the whole Pareto front, but has the worst possible diversity. 

\subsection{Experiments with Total Runtime}
\label{sec:experiments-total-runtime}

In this subsection we discuss an experiment where we ran the \gsemod on $\lotz_k$ on different problem sizes and with different values of $k$. We used $n \in \{2^3, 2^4, 2^5, 2^6, 2^7\}$ and for each $n$, except $n=2^7$, we used $k \in \{2, 4, \lfloor\sqrt{n}\rfloor, \frac{n}{2}, n\}$. 
For the largest $n = 2^7$ we used only $k \in \{2, 4, \lfloor\sqrt{n}\rfloor\}$. We used both diversity measures (the total imbalance and the sorted imbalances vector) and made $128$ runs of the \gsemod for each parameter setting and each diversity measure.

The results are shown in Figure~\ref{fig:total-runtimes}. The plots show the mean runtimes over $128$ runs and include error bars indicating the standard deviation. The dashed lines correspond to the runtimes until the entire Pareto front is covered, and solid lines correspond the total runtimes until a population with maximum diversity is found. 
All of them are normalized by the upper bound from Theorem~\ref{thm:opt-pop-runtime} on the time of the first phase of the optimization, that is, by $kn^3$, which helps to see the differences between the plots.

\begin{figure}[t]
    \begin{center}
        \begin{tikzpicture}
            \begin{axis}[width=0.5\linewidth, height=0.28\textheight,
                at={(0, 0)},anchor=outer south west,
                cycle list name=tikzcycle, grid=major,  xmode=log, log base x=2, legend style={at={(1.1, 1.2)}, anchor=south,legend columns = 5, inner sep=5pt},
                title = {Total imbalance},
                ymin = 0,
                legend cell align={left},
                xlabel={Problem size $n$}, ylabel={Runtime $/kn^3$}]
            
            \addplot plot [error bars/.cd, y dir=both, y explicit] coordinates
                {(8,0.43900299072265625)+-(0,0.16532317777627983)(16,0.43033504486083984)+-(0,0.13937613939308294)(32,0.36084556579589844)+-(0,0.1271515427038478)(64,0.289188951253891)+-(0,0.09082381296126739)(128,0.2322248313575983)+-(0,0.07361368690754483)};
            \addlegendentry{$k = 2$};
            \addplot plot [error bars/.cd, y dir=both, y explicit] coordinates
                {(8,0.2943267822265625)+-(0,0.08225170524684278)(16,0.2648506164550781)+-(0,0.05980208363893475)(32,0.23014366626739502)+-(0,0.05145233770685326)(64,0.18716266751289368)+-(0,0.04466418563898328)(128,0.143635175190866)+-(0,0.03770990634673649)};
            \addlegendentry{$k = 4$};
            \addplot plot [error bars/.cd, y dir=both, y explicit] coordinates
                {(8,0.43900299072265625)+-(0,0.16532317777627983)(16,0.2648506164550781)+-(0,0.05980208363893475)(32,0.20164270401000978)+-(0,0.04522278473263609)(64,0.13222819566726685)+-(0,0.02397940401689726)(128,0.08838175474242731)+-(0,0.016372777418544628)};
            \addlegendentry{$k = \sqrt{n}$};
            \addplot plot [error bars/.cd, y dir=both, y explicit] coordinates
                {(8,0.2943267822265625)+-(0,0.08225170524684278)(16,0.20325803756713867)+-(0,0.04116444392557903)(32,0.14370539784431458)+-(0,0.027017532406237624)(64,0.12993510533124208)+-(0,0.022639246920744743)};
            \addlegendentry{$k = n/2$};
            \addplot plot [error bars/.cd, y dir=both, y explicit] coordinates
                {(8,0.1787891387939453)+-(0,0.059747471059267976)(16,0.14882993698120117)+-(0,0.027150873769234647)(32,0.12385135889053345)+-(0,0.020413799171951637)(64,0.1295779631473124)+-(0,0.01843546563174857)};
            \addlegendentry{$k = n$};
            \addplot plot [error bars/.cd, y dir=both, y explicit] coordinates
                {(8,0.43900299072265625)+-(0,0.16532317777627983)(16,0.43033504486083984)+-(0,0.13937613939308294)(32,0.36084556579589844)+-(0,0.1271515427038478)(64,0.289188951253891)+-(0,0.09082381296126739)(128,0.2322248313575983)+-(0,0.07361368690754483)};
            \addplot plot [error bars/.cd, y dir=both, y explicit] coordinates
                {(8,0.2450714111328125)+-(0,0.07586462304979455)(16,0.22418212890625)+-(0,0.05604200043054123)(32,0.2041972279548645)+-(0,0.05321852538273462)(64,0.17365621775388718)+-(0,0.04512927605581757)(128,0.13567626848816872)+-(0,0.03865712382793666)};
            \addplot plot [error bars/.cd, y dir=both, y explicit] coordinates
                {(8,0.43900299072265625)+-(0,0.16532317777627983)(16,0.22418212890625)+-(0,0.05604200043054123)(32,0.17589468955993653)+-(0,0.04418025155138036)(64,0.11029953509569168)+-(0,0.023915610083788532)(128,0.06981660527261821)+-(0,0.012466362953836484)};
            \addplot plot [error bars/.cd, y dir=both, y explicit] coordinates
                {(8,0.2450714111328125)+-(0,0.07586462304979455)(16,0.15094876289367676)+-(0,0.03735071023529397)(32,0.10785552859306335)+-(0,0.026200846600058607)(64,0.08426874782890081)+-(0,0.017701663849637762)};
            \addplot plot [error bars/.cd, y dir=both, y explicit] coordinates
                {(8,0.15321731567382812)+-(0,0.05813841814677055)(16,0.11417484283447266)+-(0,0.029328979330548944)(32,0.09122972935438156)+-(0,0.021678150364653873)(64,0.07165798963978887)+-(0,0.011549565048606908)};
            \end{axis}

            \begin{axis}[width=0.5\linewidth, height=0.28\textheight,
                at={(0.5\linewidth, 0)},anchor=outer south west,
                cycle list name=tikzcycle, grid=major,  xmode=log, log base x=2, ymin = 0,
                title = {Sorted imbalances vector},
                xlabel={Problem size $n$}, ylabel={Runtime $/kn^3$}]
            
            \addplot plot [error bars/.cd, y dir=both, y explicit] coordinates
                {(8,0.4182281494140625)+-(0,0.15868109963032725)(16,0.4122171401977539)+-(0,0.13150247559219425)(32,0.34922707080841064)+-(0,0.10167566117996371)(64,0.296432688832283)+-(0,0.09767318222618054)(128,0.2223012112081051)+-(0,0.06222242029630154)};
            \addplot plot [error bars/.cd, y dir=both, y explicit] coordinates
                {(8,0.2772979736328125)+-(0,0.0795570829753459)(16,0.26013803482055664)+-(0,0.06504844006881105)(32,0.23202568292617798)+-(0,0.045090343683486106)(64,0.18096600472927094)+-(0,0.04556401159660343)(128,0.1413938170298934)+-(0,0.03755383025351785)};
            \addplot plot [error bars/.cd, y dir=both, y explicit] coordinates
                {(8,0.4182281494140625)+-(0,0.15868109963032725)(16,0.26013803482055664)+-(0,0.06504844006881105)(32,0.20127525329589843)+-(0,0.04402523425639102)(64,0.12697172164916992)+-(0,0.02416528078057575)(128,0.08626481281085448)+-(0,0.014389380643827259)};
            \addplot plot [error bars/.cd, y dir=both, y explicit] coordinates
                {(8,0.2772979736328125)+-(0,0.0795570829753459)(16,0.19586849212646484)+-(0,0.04045857110594427)(32,0.13345037400722504)+-(0,0.022180955830307388)(64,0.10867930203676224)+-(0,0.02141590597788401)};
            \addplot plot [error bars/.cd, y dir=both, y explicit] coordinates
                {(8,0.17180442810058594)+-(0,0.053559542062586146)(16,0.13403761386871338)+-(0,0.0281321061795537)(32,0.09580576419830322)+-(0,0.014301808131540455)(64,0.07590017886832356)+-(0,0.010684738042759576)};
            \addplot plot [error bars/.cd, y dir=both, y explicit] coordinates
                {(8,0.4182281494140625)+-(0,0.15868109963032725)(16,0.4122171401977539)+-(0,0.13150247559219425)(32,0.34922707080841064)+-(0,0.10167566117996371)(64,0.296432688832283)+-(0,0.09767318222618054)(128,0.2223012112081051)+-(0,0.06222242029630154)};
            \addplot plot [error bars/.cd, y dir=both, y explicit] coordinates
                {(8,0.22883987426757812)+-(0,0.07236703709377393)(16,0.21717548370361328)+-(0,0.06505781392375447)(32,0.20594894886016846)+-(0,0.04549436602443897)(64,0.16676345467567444)+-(0,0.044040900688731066)(128,0.13498032931238413)+-(0,0.03871409578858694)};
            \addplot plot [error bars/.cd, y dir=both, y explicit] coordinates
                {(8,0.4182281494140625)+-(0,0.15868109963032725)(16,0.21717548370361328)+-(0,0.06505781392375447)(32,0.1746135711669922)+-(0,0.0418903833377037)(64,0.10604174062609673)+-(0,0.022231842367924557)(128,0.07401483743028207)+-(0,0.014066128338292534)};
            \addplot plot [error bars/.cd, y dir=both, y explicit] coordinates
                {(8,0.22883987426757812)+-(0,0.07236703709377393)(16,0.14924025535583496)+-(0,0.03889224727760326)(32,0.10941889882087708)+-(0,0.025159718487839256)(64,0.08819631207734346)+-(0,0.017313762877203046)};
            \addplot plot [error bars/.cd, y dir=both, y explicit] coordinates
                {(8,0.15659141540527344)+-(0,0.05331780162114653)(16,0.11610889434814453)+-(0,0.02804786780215235)(32,0.0890117660164833)+-(0,0.01650239905537735)(64,0.07474494073539972)+-(0,0.011510666179926741)};
            \end{axis}
        \end{tikzpicture}
    \end{center}
    \caption{The normalized runtimes of the \gsemod when optimizing the total imbalance diversity (on the left) and the sorted imbalances vector (on the right). The dashed lines show the time until a Pareto-optimal population is found and the solid lines show the time until the optimal diversity is reached. All runtimes are normalized by $kn^3$, which is asymptotically the same as the upper bound shown in Theorem~\ref{thm:opt-pop-runtime}.}
    \label{fig:total-runtimes}
\end{figure}

The results of the runs when the \gsemod minimized the sum of total imbalances are shown in the left plot in Figure~\ref{fig:total-runtimes}. In this figure we see that all the normalized runtimes (both the runtimes until we obtain a Pareto-optimal population, indicated by the dashed lines, and runtimes until the optimal diversity, indicated by the solid lines) are decreasing, which suggests that the asymptotical upper bound might be even smaller than $O(kn^3)$, but not by a large factor.

We observe that the time required by the algorithm to get an optimal diversity after computing a Pareto-optimal population is small compared to the time required to find a Pareto-optimal population for the first time.
This matches the ratio between the upper bounds shown in Theorems~\ref{thm:opt-pop-runtime} and~\ref{thm:sorted-balances}, however, we note that without a proof of a matching lower bound for the first stage (until we cover the Pareto front) we cannot state that the duration of the second stage (that is, after finding a Pareto-optimal population) is a small fraction of the total runtime.

Also, note that for $k = 2$ both lines coincide. This is because for any fitness pair $(f_\lo, f_\tz)$ such that $f_\lo + f_\tz = n - 2$ there exists only one bit string with this fitness, hence there exists a unique Pareto-optimal population, which therefore has an optimal diversity.

The results of the runs when the \gsemod minimized the imbalances vector, sorted in descending order, are shown in the right plot in Figure~\ref{fig:total-runtimes}. In this figure we see that all the normalized runtimes are decreasing, which suggests that the asymptotical upper bound might be even smaller than $O(kn^3)$, but not by a large factor. This also indicates that the upper bound on the diversity optimization time given in Theorem~\ref{thm:sorted-balances} is not tight and in practice the time required for diversity optimization is not larger than the time needed for finding a Pareto-optimal population. For large values of $k$ the duration of diversity optimization is even smaller than it is for the total imbalance.

\subsection{Diversity after Covering the Pareto Front}

In this subsection, we discuss an experiment, where we ran the \gsemo and the \gsemod on $\lotz_k$ from a random solution and until they covered the whole Pareto front. We used the same values of $n$ and $k$ as for the experiments described in Section~\ref{sec:experiments-total-runtime}. We ran the \gsemod with each of the two diversity measures (the total imbalance and the sorted imbalances vector) and made $128$ runs for each parameter setting and each diversity measure. For each run we remembered the total imbalance at the moment when the algorithm covered the whole Pareto front. Figure~\ref{fig:total-imbalance-experiment} illustrates the mean value of the total imbalance and its standard deviation for all three approaches depending on the problem size. Solid lines correspond to the total imbalance measure, dashed lines correspond to the sorted imbalances vector, and the dashed-dotted lines correspond to no diversity optimization. To show the difference between the three approaches in more details, we apply the following transformations to the resulting total imbalance of each run (in this order).

\begin{enumerate}
    \item The values of total imbalance are normalized by the optimal imbalance for this problem size $n$ and parameter $k$. 
    \item From the normalized values we subtract one. 
    \item We use logarithmic scale of $Y$-axis.
\end{enumerate}

The first step allows to disregard the difference in the optimal total imbalance for different problem sizes and different values of $k$, which makes the comparison fair. However, it results in values that are all only slightly larger than one. The second and the third steps allow to amplify the visual difference between these values. Note that we omit $k=2$ (including the point for $n = 8$ and $k = \lfloor\sqrt{n}\rfloor = 2$), since for this $k$ we always have the optimal diversity once we cover the Pareto front.

\begin{figure}[t]
    \begin{center}
        \begin{tikzpicture}
            \begin{axis}[width=0.95\linewidth, height=0.28\textheight,
                at={(0, 0)},anchor=outer south west,
                cycle list name=tikzcycle-notwo, grid=major,  xmode=log, log base x=2, 
                ymode=log, log base y=10, ylabel style={align=center},
                legend style={at={(0.5, 1.2)}, anchor=south,legend columns = 5, inner sep=5pt},
                legend cell align={left},
                xlabel={Problem size $n$}, ylabel={Normalized total imbalance\\minus one}]
                \addplot plot [error bars/.cd, y dir=both, y explicit] coordinates{(8.0,0.0302734375)+-(0,0.0344675379)(16.0,0.0071732955)+-(0,0.0070876981)(32.0,0.0015479123)+-(0,0.0016038847)(64.0,0.0004642835)+-(0,0.0004499696)(128.0,0.0001018812)+-(0,0.0001052467)};
                \addlegendentry{$k = 4$};
                \addplot plot [error bars/.cd, y dir=both, y explicit] coordinates{(16.0,0.0071732955)+-(0,0.0070876981)(32.0,0.0020368782)+-(0,0.0022178560)(64.0,0.0007542959)+-(0,0.0007055796)(128.0,0.0002406158)+-(0,0.0001788462)};
                \addlegendentry{$k = \sqrt{n}$};
                \addplot plot [error bars/.cd, y dir=both, y explicit] coordinates{(8.0,0.0302734375)+-(0,0.0344675379)(16.0,0.0138658940)+-(0,0.0132886157)(32.0,0.0035713338)+-(0,0.0032712504)(64.0,0.0007920681)+-(0,0.0005651316)};
                \addlegendentry{$k = n/2$};
                \addplot plot [error bars/.cd, y dir=both, y explicit] coordinates{(8.0,0.0417480469)+-(0,0.0426617842)(16.0,0.0130888526)+-(0,0.0126707479)(32.0,0.0042826705)+-(0,0.0034277940)(64.0,0.0024994399)+-(0,0.0010233301)};
                \addlegendentry{$k = n$};
                \addplot plot [error bars/.cd, y dir=both, y explicit] coordinates{(8.0,0.0296875000)+-(0,0.0330349602)(16.0,0.0069602273)+-(0,0.0066488805)(32.0,0.0017991442)+-(0,0.0018222139)(64.0,0.0004407755)+-(0,0.0004190570)(128.0,0.0000960870)+-(0,0.0000893662)};
                \addplot plot [error bars/.cd, y dir=both, y explicit] coordinates{(16.0,0.0069602273)+-(0,0.0066488805)(32.0,0.0020033769)+-(0,0.0021475030)(64.0,0.0006999699)+-(0,0.0006481477)(128.0,0.0001931234)+-(0,0.0001773373)};
                \addplot plot [error bars/.cd, y dir=both, y explicit] coordinates{(8.0,0.0296875000)+-(0,0.0330349602)(16.0,0.0102442053)+-(0,0.0087552643)(32.0,0.0020725011)+-(0,0.0022897970)(64.0,0.0001488535)+-(0,0.0001745878)};
                \addplot plot [error bars/.cd, y dir=both, y explicit] coordinates{(8.0,0.0310058594)+-(0,0.0385703556)(16.0,0.0054221082)+-(0,0.0070251174)(32.0,0.0009055398)+-(0,0.0014379559)(64.0,0.0000319431)+-(0,0.0000784899)};
                \addplot plot [error bars/.cd, y dir=both, y explicit] coordinates{(8.0,0.2105468750)+-(0,0.0529048172)(16.0,0.0915838068)+-(0,0.0142282739)(32.0,0.0460240534)+-(0,0.0046407235)(64.0,0.0230966337)+-(0,0.0018093502)(128.0,0.0117076445)+-(0,0.0006657773)};
                \addplot plot [error bars/.cd, y dir=both, y explicit] coordinates{(16.0,0.0915838068)+-(0,0.0142282739)(32.0,0.0732539130)+-(0,0.0065653462)(64.0,0.0814232164)+-(0,0.0031374625)(128.0,0.0640262499)+-(0,0.0014202032)};
                \addplot plot [error bars/.cd, y dir=both, y explicit] coordinates{(8.0,0.2105468750)+-(0,0.0529048172)(16.0,0.3453280215)+-(0,0.0311838758)(32.0,0.4376425881)+-(0,0.0188792272)(64.0,0.4937238353)+-(0,0.0107355933)};
                \addplot plot [error bars/.cd, y dir=both, y explicit] coordinates{(8.0,0.6240234375)+-(0,0.1090212961)(16.0,0.8315939832)+-(0,0.0622250634)(32.0,0.9538671875)+-(0,0.0288359967)(64.0,1.0247865499)+-(0,0.0165278258)};
            \end{axis}
        \end{tikzpicture}
    \end{center}
    \caption{The mean diversity (total imbalance) and its standard deviation at the moment when the \gsemo or the \gsemod covers the whole Pareto front for the first time. Solid lines correspond to the \gsemod optimizing the total imbalance measure. Dashed lines correspond to the \gsemod optimizing the sorted imbalances vector. Dashed-dotted lines correspond to the \gsemo which does not optimize any diversity measure. For highlighting the differences, each value is normalized by the minimum possible total imbalance for the corresponding values of $n$ and $k$ computed as in Lemma~\ref{lem:imbalance} and then decreased by one.}
    \label{fig:total-imbalance-experiment}
\end{figure}

In Figure~\ref{fig:total-imbalance-experiment} we see that the \gsemod, independently of which diversity measure it optimizes, has a diversity very close to the optimal value when it covers the Pareto front. In contrast, the \gsemo has a diversity which is far from optimal (except for $k = 2$, when we have a unique Pareto-optimal population). This indicates that diversity optimization works very well also in the first phase of optimization, even before we find the whole Pareto front. We believe that this is a strong argument for implementing diversity-improving tie-breaking mechanisms in classic optimization: they help obtain a much more diverse population at almost no cost. In addition, in crossover-based algorithms it might help to exploit the true potential of crossover, as it was done in~\cite{DangFKKLOSS16}.   

\subsection{Starting with the Worst Diversity}

To better understand diversity optimization and to isolate it from Pareto optimization, we performed experiments where the initial population covers the Pareto front, but has the worst possible diversity. We used the same values of $n$ and $k$ as in Section~\ref{sec:experiments-total-runtime}. To make a population with the worst diversity, we maximized the imbalance of each position. For this we used Lemma~\ref{lem:imbalance}: while we cannot change the bits values that determine fitness of each individual in the population, for each position $i$ we can set the $i$-th bit of all individuals with fitness in $M(i)$ to either one or zero, and we can choose the value which maximizes the imbalance. The results of this experiment are shown in Figure~\ref{fig:worst-div-experiment}.

\begin{figure}[t]
    \begin{center}
        \begin{tikzpicture}
            \begin{axis}[width=0.95\linewidth, height=0.28\textheight,
                at={(0, 0)},anchor=outer south west,
                cycle list name=tikzcycle, grid=major,  xmode=log, log base x=2, legend style={at={(0.5, 1.2)}, anchor=south,legend columns = 5, inner sep=5pt},
                legend cell align={left},
                xlabel={Problem size $n$}, ylabel={Runtime $/kn^2\ln(n)$}]

                \addplot plot [error bars/.cd, y dir=both, y explicit] coordinates{(8.0,0.0000)+-(0,0.0000)(16.0,0.0000)+-(0,0.0000)(32.0,0.0000)+-(0,0.0000)(64.0,0.0000)+-(0,0.0000)(128.0,0.0000)+-(0,0.0000)};
                \addlegendentry{$k = 2$};
                \addplot plot [error bars/.cd, y dir=both, y explicit] coordinates{(8.0,0.8163)+-(0,0.3557)(16.0,1.0062)+-(0,0.3169)(32.0,1.1054)+-(0,0.2446)(64.0,1.1188)+-(0,0.2541)(128.0,1.0767)+-(0,0.1801)};
                \addlegendentry{$k = 4$};
                \addplot plot [error bars/.cd, y dir=both, y explicit] coordinates{(8.0,0.0000)+-(0,0.0000)(16.0,1.0062)+-(0,0.3169)(32.0,1.1224)+-(0,0.2750)(64.0,1.1889)+-(0,0.2045)(128.0,1.3373)+-(0,0.3609)};
                \addlegendentry{$k = \sqrt{n}$};
                \addplot plot [error bars/.cd, y dir=both, y explicit] coordinates{(8.0,0.8163)+-(0,0.3557)(16.0,0.9814)+-(0,0.2652)(32.0,1.0660)+-(0,0.2594)(64.0,1.6893)+-(0,0.3653)};
                \addlegendentry{$k = n/2$};
                \addplot plot [error bars/.cd, y dir=both, y explicit] coordinates{(8.0,0.4531)+-(0,0.1719)(16.0,0.6517)+-(0,0.1532)(32.0,0.9425)+-(0,0.1785)(64.0,1.6397)+-(0,0.2822)};
                \addlegendentry{$k = n$};
                \addplot plot [error bars/.cd, y dir=both, y explicit] coordinates{(8.0,0.0000)+-(0,0.0000)(16.0,0.0000)+-(0,0.0000)(32.0,0.0000)+-(0,0.0000)(64.0,0.0000)+-(0,0.0000)(128.0,0.0000)+-(0,0.0000)};
                \addplot plot [error bars/.cd, y dir=both, y explicit] coordinates{(8.0,0.7893)+-(0,0.3332)(16.0,0.9969)+-(0,0.3275)(32.0,1.0750)+-(0,0.3021)(64.0,1.1145)+-(0,0.2657)(128.0,1.1118)+-(0,0.2493)};
                \addplot plot [error bars/.cd, y dir=both, y explicit] coordinates{(8.0,0.0000)+-(0,0.0000)(16.0,0.9969)+-(0,0.3275)(32.0,1.1065)+-(0,0.2401)(64.0,1.2211)+-(0,0.2414)(128.0,1.2163)+-(0,0.2004)};
                \addplot plot [error bars/.cd, y dir=both, y explicit] coordinates{(8.0,0.7893)+-(0,0.3332)(16.0,0.9488)+-(0,0.2385)(32.0,0.9725)+-(0,0.1872)(64.0,1.0982)+-(0,0.3074)};
                \addplot plot [error bars/.cd, y dir=both, y explicit] coordinates{(8.0,0.3977)+-(0,0.1277)(16.0,0.5515)+-(0,0.1296)(32.0,0.6149)+-(0,0.1049)(64.0,0.6774)+-(0,0.0870)};
            \end{axis}
        \end{tikzpicture}
    \end{center}
    \caption{The normalized runtimes of the \gsemod when optimizing the total imbalance diversity (solid lines) and the sorted imbalances vector (dashed lines). The initial population is a population with the worst possible diversity, and it is the same in all runs. All runtimes are normalized by $kn^2\ln(n)$, which is asymptotically the same as the upper bound shown in Theorem~\ref{thm:total-balance}.}
    \label{fig:worst-div-experiment}
\end{figure}

In this plot we see that despite the differences in the diversity measures, the performance of the algorithm is very similar when optimizing any of these two measures. Interestingly, for $k = n$ we even observe a small advantage of optimizing the measure of sorted imbalances vector, which is surprising, since this measure seems more complicated to optimize. We also see from this plot that our upper bound in Theorem~\ref{thm:total-balance} is likely to be tight, and that the true bound from Theorem~\ref{thm:sorted-balances} is likely to be similar to the bound for total imbalance.

\section{Conclusion}

In this paper, we have shown that optimizing diversity with EAs in a multi-objective setting might be easy compared to the time needed for the computation of the Pareto front. We showed that a simple tie-breaking rule implemented into the \gsemo can effectively find the best possible diversity of a Pareto-optimal population. This lines up with the result of \cite{DBLP:conf/foga/Antipov0N23} for the \oneminmax problem, even though the main source of diversity improvements is different in their setting (in~\cite{DBLP:conf/foga/Antipov0N23} and also in~\cite{DBLP:conf/gecco/DoerrGN16} the proofs relied on two-bits flips which improve the diversity).
Our analysis is also the first one performed on a multi-objective problem with more than two objectives, which demonstrates that evolutionary algorithms can be effective within such a multi-dimensional domain, where the main factor which slows down the optimization is usually the large size of the Pareto front. Both theoretical and empirical results strongly support the idea of implementing tie-breaking rules for diversity enhancement into multi-objective algorithms, since it helps to obtain a much more diverse population without slowing down the optimization itself. 

The results, however, raise a question regarding which diversity measures are the fastest to optimize. As we see from our results, although the two considered measures share the set of populations which have the optimal diversity, they are optimized with the \gsemod in different ways. In practice the difference might be even larger, since the diversity is also optimized in the earlier stages of optimization, that is, before we find a Pareto-optimal population, and this difference might be of particular interest for using EDO in practice.

\section*{Acknowledgements} This work has been supported by the Australian Research Council (grants DP190103894 and FT200100536), as well as by the European Union (ERC CoG ``dynaBBO'', grant no.~101125586) and Alliance Sorbonne Universit{\'e} (project number EMERGENCE 2023 RL4DAC).

\end{document}